\let\ORIlabel\label
\let\ORIrefstepcounter\refstepcounter
   \let\label\ORIlabel 
   \let\refstepcounter\ORIrefstepcounter}
\definecolor{brightpink}{rgb}{1.0, 0.0, 0.5}
\newcommand{\name}{\text{BCD}}
\newcommand{\ename}{\text{eBCD}}
\newcommand{\revise}[1]{{{\color{black} #1}}}
\pgfplotsset{compat=newest}
\DeclareMathOperator{\argmin}{argmin}
\DeclareMathOperator{\rank}{rank}
\DeclareMathOperator{\sgn}{sign}
\DeclareMathOperator{\nnz}{nnz} 
\title{An extrapolated and provably convergent algorithm for nonlinear matrix decomposition with the ReLU function\thanks{G.S. and N.G. acknowledge  the support by the European Union (ERC consolidator, eLinoR, no 101085607). 
The work of M.P.\ and G.S.\ was partially supported by INdAM-GNCS through Progetti di Ricerca.  The research of M.P.\  was  partially granted by the Italian Ministry of University and Research (MUR) through the PRIN 2022 ``MOLE: Manifold constrained Optimization and LEarning'',  code: 2022ZK5ME7 MUR D.D. financing decree n. 20428 of November 6th, 2024 (CUP B53C24006410006).}}
\author{Nicolas Gillis\thanks{University of Mons, Rue de Houdain 9, 7000 Mons, Belgium. Email: {\tt nicolas.gillis@umons.ac.be}}
\and Margherita Porcelli\thanks{Dipartimento di Ingegneria Industriale, Universit\`a degli Studi di Firenze, Viale Morgagni 40/44, 50134 Firenze, Italia. ISTI--CNR, Via Moruzzi 1, Pisa, Italia. Member of the INdAM Research Group GNCS. Email: {\tt margherita.porcelli@unifi.it}.} \and 
 Giovanni Seraghiti\thanks{Corresponding author. University of Mons, Rue de Houdain 9, 7000 Mons, Belgium, and 
 Dipartimento di Ingegneria Industriale, Universit\`a degli Studi di Firenze, Viale Morgagni 40/44, 50134 Firenze, Italia. Member of the INdAM Research Group GNCS. Email: {\tt giovanni.seraghiti@umons.ac.be}. 
 }
 }
\date{December 2024}
\begin{document}

\maketitle

\begin{abstract}
\revise{ReLU matrix decomposition (RMD)} is the following problem: given a sparse, nonnegative matrix $X$ and a factorization rank $r$, identify a rank-$r$ matrix $\Theta$ such that $X\approx \max(0,\Theta)$. \revise{RMD is a particular instance of nonlinear matrix decomposition (NMD) that} finds application in data compression, matrix completion with entries missing not at random, and manifold learning. 
The standard \revise{RMD} model minimizes the least squares error, that is, \mbox{$\|X - \max(0,\Theta)\|_F^2$}. The corresponding optimization problem, \revise{least-squares RMD (LS-RMD)}, is nondifferentiable and highly nonconvex. This motivated Saul  to propose an alternative model, \revise{dubbed Latent-RMD}, where a latent variable $Z$ is introduced and satisfies \mbox{$\max(0,Z)=X$} while minimizing \mbox{$\|Z - \Theta\|_F^2$}   (``A nonlinear matrix decomposition for mining the zeros of sparse data'', SIAM J.\ Math.\ Data Sci., 2022).  
Our first contribution is to show that the two formulations may yield different low-rank solutions $\Theta$.
\revise{We then consider a reparametrization of the Latent-RMD, called 3B-RMD, in which $\Theta$ is substituted by a low-rank product $WH$, where $W$ has $r$ columns and $H$ has $r$ rows.} 
Our second contribution is to prove the convergence of a block coordinate descent (BCD) \revise{approach} applied to \revise{3B-RMD.}
Our third contribution is a novel extrapolated variant of \revise{\name}, dubbed \revise{\ename}, which we prove is also convergent under mild assumptions. We illustrate the significant acceleration effect of \revise{\ename} compared to  \revise{\name}, and also show that \revise{\ename}  performs well against the state of the art on synthetic and real-world data sets.   
\end{abstract} 

\begin{keywords}
Low-rank matrix approximations, nonlinear matrix decomposition, rectified linear unit, block coordinate descent, extrapolation.  
\end{keywords} 

	\begin{AMS}
15A23, 65F55, 68Q25, 90C26, 65K05. 
	\end{AMS}

\section{Introduction} 

Nonlinear matrix decomposition (NMD) can be defined as follows:  given a data matrix $X \in \mathbb{R}^{n \times m}$, a factorization rank $r$, NMD looks for a rank-$r$ matrix $\Theta \in \mathbb{R}^{m\times n}$ such that $X\approx f(\Theta)$, where $f(\Theta)$ applies the scalar function $f$  element-wise on $\Theta$, that is, $[f(\Theta)]_{i,j}  = f(\Theta_{i,j})$ for all $i,j$. 
The choice of $f$ depends on the type of data at hand. 
For example, one might choose $f(x)=\sgn(x)$ for binary data in $\{-1,1\}$~\cite{delsarte1989low}, or  \revise{$f(x)=x^2$ for nonnegative data with a multiplicative structure}, resulting in the Hadamard product decomposition which is efficient for the compression of dense images~\cite{ciaperoni2024hadamard, hyeon2021fedpara,wertz2025efficient} and for the representation of  probabilistic circuits~\cite{loconte2024subtractive}. \revise{For a more detailed description of the choices for the function $f$;  see~\cite{awari2025alternating}.}
In this work, we assume $X$ to be nonnegative and sparse, opting for $f$ to be the ReLU function, $f(\cdot)=\max(0,\cdot)$. This NMD, with $X \approx \max(0,\Theta)$, is referred to as the \revise{ReLU matrix decomposition (RMD). This model has a close connection with a two-layer neural network, using ReLU as the activation function in the output layer; see~\cite{saul2022geometrical,saul2022nonlinear} for more details.} 
Intuitively, \revise{RMD} aims at finding a matrix $\Theta$ that substitutes the zero entries of the target matrix $X$ with negative values in order to decrease its rank. Note that some degree of sparsity on the matrix $X$ is an essential condition for \revise{RMD} to be 
 meaningful in practice. In fact, if $X$ has mostly positive entries, then $\Theta$ will also have mostly positive entries, and the ReLU nonlinear activation will be active only for few entries, leading to a model that is similar to the truncated singular value decomposition (TSVD).  

Minimizing the least squares error leads to an optimization problem referred to as \revise{LS-RMD}: 
\begin{equation}
       \min_{\Theta}\lVert X - \max(0,\Theta) \rVert_F^2 \quad \mbox{such that } \quad \rank(\Theta) \leq r.
    \label{eq:ReLU_NMD}
\end{equation} 
It is interesting to note that \revise{LS-RMD} will always lead to a lower error than the TSVD. In fact, let $X_r$ be an optimal rank-$r$ approximation of $X \geq 0$ (e.g., obtained with the TSVD), then \\
 $\|X-X_r\|_F^2 \geq \|X - \max(X_r,0)\|_F^2$ since negative entries in $X_r$, if any, are replaced by zeros which better approximate the corresponding nonnegative entries in $X$.

\revise{LS-RMD} was introduced by Saul in~\cite{saul2022geometrical,saul2022nonlinear}. It is non-differentiable and non-convex, and hence difficult to tackle. Indeed, to the best of our knowledge, 
there exist only \revise{a few} algorithms that address (\ref{eq:ReLU_NMD}) directly~\cite{awari2025alternating,awaricoordinate}.
All other existing approaches rely on Saul's alternative 
formulation, dubbed \revise{Latent-RMD}, and defined as 
\begin{equation}
     \min_{Z,\Theta} \lVert Z- \Theta \rVert_F^2 \quad 
     \mbox{such that} \quad \begin{cases} \mbox{rank}(\Theta) \leq r, \\
    \max(0,Z)=X.
    \end{cases}
    \label{eq:lat_nmd}
\end{equation} 
The objective function in (\ref{eq:lat_nmd}) is differentiable, while the subproblems in the variables $Z$ and $\Theta$ can be solved up to global optimality; see Section~\ref{sec:rel_works}. 
However, for a general matrix $X$, the link between the solutions of (\ref{eq:ReLU_NMD}) and (\ref{eq:lat_nmd}) has not been explored in the literature. Our first contribution, in Section~\ref{sec:std_vs_lat}, is to \revise{better} understand the connection between the two. 

\revise{If we reparametrize $\Theta=WH$ in~\eqref{eq:lat_nmd}, 
where $W \in \mathbb{R}^{m \times r}$ and  $H \in \mathbb{R}^{r \times n}$, as suggested in~\cite{seraghiti2023accelerated}, we obtain the following equivalent problem: }
\begin{equation}
     \min_{Z,W, H} \lVert Z - WH  \rVert_F^2 \quad 
     \text{ such that } \quad \max(0,Z)=X. 
    \label{eq:3B_nmd}
\end{equation}
\revise{We denote this reformulation 3B-RMD,} where 3B stands for 3 blocks of variables. 
The objective function in (\ref{eq:3B_nmd}) is again jointly nonconvex, but it is convex with respect to each matrix variable, $W$, $H$ and $Z$, \revise{making it more friendly from an algorithmic point of view}, as the rank constraint is not explicitly involved. \revise{Problems (\ref{eq:lat_nmd}) and} (\ref{eq:3B_nmd}) can be tackled using the block coordinate descent (BCD) method, optimizing each matrix variable alternatively~\cite{seraghiti2023accelerated,wang2024momentum}.

\paragraph{Outline and contribution} 

 Section~\ref{sec:app} motivates the study of \revise{RMD} by presenting several  applications.  
 In Section~\ref{sec:std_vs_lat}, we clarify the relationship between \revise{LS-RMD} and \revise{Latent-RMD}. In particular, we provide a  matrix $X$ for which the infimum of the rank-1 \revise{Latent-RMD} is not attained, showing the ill-poseness of \revise{this model} (\revise{Example}~\ref{th:ill_posed_lat}). On the other hand, for the same matrix $X$, \revise{LS-RMD} has a non-empty set of optimal rank-1 solutions (\revise{Example}~\ref{th:equival_th}). This shows that, in general, the two models do not share the same set of optimal low-rank solutions, that is, the sets 
    \[
    \{ \Theta^* \ | \ \Theta^* \text{ optimal for LS-RMD ~\eqref{eq:ReLU_NMD}}\}, \quad \text{and} \quad 
    \{ \widehat{\Theta} \ | \ (\widehat{Z}, \widehat{\Theta}) \text{ optimal for Latent-RMD~\eqref{eq:lat_nmd}}\}, 
    \]
    do not necessarily coincide. 
 However, we show that any feasible point \revise{$(\Bar{Z},\Bar{\Theta})$} of the \revise{Latent-RMD} model satisfies the relation $$\lVert X-\max(0,\revise{\Bar{\Theta}}) \rVert_F \leq \revise{4}\lVert \revise{\Bar{Z}}-\revise{\Bar{\Theta}} \rVert_F. $$ 
    This guarantees that any approximated solution of \revise{Latent-RMD} provides an approximated solution for \revise{LS-RMD} whose residual is at most \revise{four times} that of \revise{Latent-RMD}. 

   In Section~\ref{sec:rel_works}, we first briefly review the state-of-the-art algorithms for solving \revise{Latent-RMD} and \revise{3B-RMD} (Section~\ref{subsec:rel_works}). 
   We then prove the convergence of the BCD scheme for solving \revise{3B-RMD}  (Section~\ref{subsec:conv_bcd_nmd}), demonstrating that it falls within the framework of BCD with strictly quasi-convex subproblems~\cite{grippo2000convergence}. In addition, we introduce a novel extrapolated version of \revise{\name} (\revise{\ename}) (Section~\ref{subsec:ebcd_nmd}), adapting the well-known LMaFit algorithm for matrix completion~\cite{wen2012solving}. 
   We prove the subsequence convergence of \revise{\ename}  (Theorem~\ref{th:conv_ebcd}). To the best of our knowledge, \revise{\name} and \revise{\ename} are the first algorithms for solving \revise{3B-RMD} with convergence guarantees. 
    
    In Section~\ref{sec:num_res}, we provide extensive numerical experiments comparing our new approach, \revise{\ename}, with the state of the art. 
    We test the algorithms on various applications: matrix completion with ReLU sampling, Euclidean distance matrix completion (EDMC), compression of sparse data, and low-dimensional embedding. In particular, our experiments show that \revise{\ename} consistently accelerates \revise{\name} and performs favorably \revise{compared to} the state of the art, while having stronger theoretical guarantees. 

\section{Applications of RMD}
\label{sec:app}

In this section, we present some applications of \revise{RMD}, illustrating how nonlinear models can be used in a variety of contexts.

\subsection{Compression of sparse data}

\revise{The first application} is the compression of sparse data. 
Let $X \in \mathbb{R}^{m \times n}$ be a sparse and nonnegative matrix with $\nnz(X)$ non-zero entries, and let $W \in \mathbb{R}^{m \times r}$ and $H \in \mathbb{R}^{r \times n}$ be such that $X \approx\max(0,WH)$. 
As long as $r(m+n) < \nnz(X)$, storing the factors $W$ and $H$ requires less memory than storing $X$. 
\revise{RMD} typically outperforms standard linear compression techniques, \revise{including} the TSVD on sparse data, such as sparse images or dictionaries~\cite{awaricoordinate,saul2022geometrical,saul2022nonlinear,seraghiti2023accelerated, wang2024momentum}. 
An interesting example is the identity matrix of any dimension, which can be exactly reconstructed using a rank-3 \revise{RMD}~\cite{delsarte1989low, saul2022nonlinear}.

\subsection{Matrix completion with ReLU sampling}

The class of matrix completion problems with entries missing not at random (MNR) is characterized by the probability of an entry being missing \revise{that} depends on the matrix itself. For example, when completing a survey, people are likely to share nonsensitive information such as their name or surname, while they might be more hesitant to disclose their phone number. Therefore, we might expect the missing entries to be concentrated in some areas of the matrix more than others. Even though MNR matrix completion is still a relatively unexplored area, dropping the assumption that missing entries are independent of the matrix is a challenging problem that attracts more and more \revise{researchers} as the increasing number of related works testifies~\cite{bhattacharya2022matrix,bi2021absence,ganti2015matrix,liu2024symmetric,naik2022truncated}. Among them, Liu et al.~\cite{liu2024symmetric} investigate the matrix completion problem when only positive values are observed, namely matrix completion with ReLU sampling. This instance of matrix completion falls within the MNR problems since the missing entries depend on the sign of the target matrix. Assume that $X$ is a nonnegative and sparse matrix and that there exists a rank-$r$ matrix $\Theta$ such that  $X=\max(0,\Theta)$. In this case, finding an \revise{RMD} is equivalent to recovering the negative values observing the positive ones; thus, it is equivalent to matrix completion with ReLU sampling. This implies that we can use the algorithms for \revise{RMD} to address this matrix completion problem, as illustrated in~\cite{liu2024symmetric}.

Inspired by the connection to ReLU sampling matrix completion, we introduce an equivalent formulation of \revise{3B-RMD} that we will use throughout the paper. Let $ \Omega=\{(i,j) \ | \ X_{ij}>0 \}$ be the set of positive entries of $X$, and let $P_{\Omega}(X)$ be equal to $X$ in $\Omega$ and zero elsewhere. 
We rewrite \revise{3B-RMD} in (\ref{eq:3B_nmd}) as
\begin{equation}
     \min_{Z,W,H} \frac{1}{2} \lVert Z-WH \rVert_F^2  \quad 
     \mbox{such that} \quad 
    P_{\Omega}(Z)=P_{\Omega}(X),\quad P_{\Omega^C}(Z) \leq 0, 
    \label{eq:lat_3B_matr_com}
\end{equation}
where $\Omega^C$ is the complement of $\Omega$ and a reformulation of the constraint $\max(0,Z)=X$ is introduced. To our knowledge, the problem (\ref{eq:lat_3B_matr_com}), without the constraint $P_{\Omega^C}(Z) \leq 0$, was first addressed in~\cite{wen2012solving} in the context of matrix completion. The novelties in our model are the additional information that the missing entries have negative signs, which translates into the constraints $P_{\Omega^C}(Z) \leq 0$, and the explicit dependence of the known index set $\Omega$ \revise{on} the target matrix $X$. Since the formulations (\ref{eq:3B_nmd}) and (\ref{eq:lat_3B_matr_com}) are equivalent, we will refer to both as \revise{3B-RMD}.

\subsection{Euclidean distance matrix completion (EDMC)}

We provide one practical example of MNR matrix completion problem that can be solved using \revise{RMD}. \revise{Specifically, we address} the recovery of a low-rank matrix for which only the smallest or the largest entries are observed. Let $\Theta$ be a low-rank matrix, and assume that we observe all the entries in $\Theta$ smaller than a known threshold $d$. We want to recover the missing entries, which are the ones larger than $d$. We can model this problem using a rank-1 modified \revise{RMD}: $X=\max(0,d ee^T - \Theta)$, where $e$ denotes the vector of all ones of appropriate dimension. The matrix $X$ is nonzero exactly in the entries where $\Theta$ is smaller than $d$. Of course, an analogous model can be derived when only the largest entries are observed. 
The \revise{3B-RMD} formulation in (\ref{eq:lat_3B_matr_com}) can be easily adapted to include the rank-1 modification: 
\begin{equation}
     \min_{Z,W,H} \frac{1}{2} \lVert d e e^T - WH - Z \rVert_F^2  \quad 
     \mbox{such that} \quad 
    P_{\Omega}(Z)=P_{\Omega}(X),\quad P_{\Omega^C}(Z) \leq 0. 
    \label{eq:lat_3B_matr_com_rank1}
\end{equation}
The same idea applies also to the \revise{Latent-RMD} and all the algorithms can be adjusted to take into account the additional rank-one term. This formulation is particularly interesting when the matrix $\Theta$ contains the squared distances between a collection of points $p_1,\dots,p_n \in \mathbb{R}^d$. This problem is often referred to as Euclidean distance matrix completion (EDMC)~\cite{alfakih1999solving,fang2012Euclidean,krislock2012euclidean,tasissa2018exact}, and is particularly meaningful in sensor network localization problems where sensors can only communicate with nearby sensors. 
Let $P =[p_1,\dots,p_n] \in \mathbb{R}^{d\times n}$, then the corresponding Euclidean distance matrix is $\Theta_{i,j} = \|p_i - p_j\|_2^2$ for all $i,j$, so that 
 $\Theta = e\text{diag}(P^TP)^T + \text{diag}(P^TP) e^T + 2 P^T P$ and it has at most rank $r=d+2$ and thus the rank of the factorization in (\ref{eq:lat_3B_matr_com_rank1}) is known in advance. We will show in Section~\ref{sec:num_res} that this slightly modified \revise{RMD} allows us to solve EDMC in this context.

\subsection{Manifold learning} \label{sec:maniflearn}

Manifold learning aims at learning a norm-preserving and neighborhood-preserving mapping of high-dimensional inputs into a lower-dimensional space~\cite{lee2007nonlinear}.  Given a collection of points $\{z_1,\dots, z_m\}$ in $\mathbb{R}^N$, Saul~\cite{saul2022geometrical} looks for a faithful representation $\{y_1,\dots,y_m\}$  in $\mathbb{R}^r$, with $r < N$. Let $\tau \in (0,1)$, an embedding is $\tau$-faithful if
\begin{equation} 
    \max(0,\langle y_i, y_j \rangle - \tau \lVert y_i \rVert \lVert y_j \rVert) = \max(0,\langle z_i, z_j \rangle - \tau \lVert z_i \rVert \lVert z_j \rVert ) . 
    \label{eq:tau_faith}
\end{equation}
Equation (\ref{eq:tau_faith}) implies that the embedding must preserve norms (take $i=j$ to obtain \mbox{$(1-\tau)\lVert y_i  \rVert^2$} $=$ \mbox{$(1-\tau)\lVert z_i  \rVert^2$}) 
as well as the small angles between two points $(z_i, z_j)$: when $\cos(z_i,z_j) > \tau$, we need 
$\cos(z_i,z_j) = \cos(y_i,y_j)$. 
Large angles have to remain large but do not need to be preserved, that is, $\cos(z_i,z_j) \leq \tau$ only requires $\cos(y_i,y_j) \leq \tau$. The condition~\eqref{eq:tau_faith}  defines a similarity between points based on norm and angles, which differs substantially from the more common concept of pairwise distances. This makes it suitable, for example, in text analysis where the $z_i$'s are sparse vectors of word counts. 

This embedding strategy is dubbed threshold similarity matching (TSM). Saul proposes three main steps to compute such an embedding. First, for each couple of points $(z_i, z_j)$, compute the right-hand side in (\ref{eq:tau_faith}) and construct the sparse, nonnegative similarity matrix 
$$ 
X_{ij}= \max(0,\langle z_i, z_j \rangle - \tau \lVert z_i \rVert \lVert z_j \rVert ) \quad \text{ for all } i,j. 
$$
Second, compute an \revise{RMD}, $X \approx \max(0,\Theta)$, and the matrix $\max(0,\Theta)$ represents the similarities between points in the lower-dimensional space. Finally, the embedded points are recovered  from the low-rank approximation $\Theta$; see~\cite{saul2022geometrical} for more details.

\section{ LS-RMD vs Latent-RMD} 
\label{sec:std_vs_lat}

In this section, we explore the link between \revise{LS-RMD} in (\ref{eq:ReLU_NMD}) and \revise{Latent-RMD} in (\ref{eq:lat_nmd}). In particular, we aim to determine if the optimal low-rank approximations provided by the two models coincide. First, when  $X$ contains negative values, \revise{Latent-RMD} is infeasible since there exists no latent variable $Z$ such that  $X = \max(0,Z)$. In contrast, \revise{LS-RMD} still admits feasible solutions. Of course, the \revise{RMD} is more meaningful when $X$ is nonnegative, which is the case we focus on in the following. 

In the exact case, if there exists a low-rank matrix $\Theta$ such that $X=\max(0,\Theta)$, \revise{LS-RMD} and \revise{Latent-RMD} are equivalent: for any low-rank solution $\Theta^*$ of (\ref{eq:ReLU_NMD}), there exists a latent matrix $Z^*$, such that $(Z^*,\Theta^*)$ is a solution of (\ref{eq:lat_nmd}) and vice versa. Indeed, in this case, there always exists a trivial latent solution $Z^*=\Theta^*$. We show in what follows that this is not \revise{always} true \revise{if} an exact decomposition does not exist.

We demonstrate first that there exists a specific matrix $X$ such that the rank-1 \revise{Latent-RMD} is ill-posed since the infimum of \eqref{eq:lat_nmd} is not attained 
(see \revise{Example}~\ref{th:ill_posed_lat}). Moreover, by showing that for the same matrix $X$, \revise{LS-RMD} admits a non-empty set of optimal solutions (\revise{Example}~\ref{th:equival_th}), we establish that the two formulations do not necessarily yield the same optimal approximations. 
Note that we could not establish the general well-poseness of \revise{LS-RMD}, which is a direction of future research. 
\begin{example}   \label{th:ill_posed_lat}
Let $0 < \epsilon < 1 / \sqrt{2}$ be a fixed parameter, $r=1$, and     
\begin{equation}   
X=\begin{pmatrix}
    1 & 0\\
    \epsilon & 1
\end{pmatrix}. 
\label{eq:counter_matrix}
\end{equation} 
Then the infimum of \revise{Latent-RMD} in (\ref{eq:lat_nmd}) is  not attained.
\end{example} 

\begin{proof}
 Let us prove the result by contradiction. Assume there exists a solution to (\ref{eq:lat_nmd}), denoted $(\widehat{Z},\widehat{\Theta})$. 
 \revise{It is straightforward to notice that, independently of} $\widehat{Z}$, the optimal $\widehat{\Theta}$ must be a best rank-1 approximation of $\widehat{Z}$.
Let us denote the $i$-th singular value of $\widehat{Z}$ as $\sigma_i(Z)$, and $\widehat{Z}_1$ be a best rank-one approximation of $\widehat{Z}$. 
By the Eckart-Young theorem, $\|\widehat{Z} - \widehat{Z}_1\|_F^2 = \sigma_2(Z)^2$ since $\widehat{Z}$ is a 2-by-2 matrix.  
Hence $\widehat{Z}$ must belong to 
\begin{equation}
    \argmin_Z \sigma_2(Z)^2 \quad \mbox{such that } \quad \max(0,Z)=X.
   \label{eq:svd_latent_2} 
\end{equation} 
Equivalently, we want to find $b \leq 0$ such that the matrix $Z=\begin{pmatrix}
    1 & b\\
    \epsilon & 1
\end{pmatrix}$ has the lowest possible second singular value. Since $Z$ is a $2\times 2$ matrix, we can compute the eigenvalues of $ZZ^T$ by solving det$(ZZ^T-\sigma I)=0$ and get an explicit expression for the second singular value of $Z$. \revise{Solving the equation allows us to reformulate~\eqref{eq:svd_latent_2} as }
\begin{equation}
     \argmin_{b<0} \; \ell(b) :=\frac{2+b^2+\epsilon^2-\sqrt{(b^2-\epsilon^2)^2+4(b+\epsilon)^2}}{2}.
    \label{eq:equival_d}
\end{equation} 
One can verify that $\lim_{b \rightarrow -\infty} \ell(b) =  \epsilon^2$, while $\ell(b) > \epsilon^2$ for every $\epsilon \leq 1 / \sqrt{2}$ and $b < 0$. This means the infimum of $\ell(b)$ is not attained by any finite value of $b$, and hence $\widehat{Z}$ does not exist. 
\end{proof}

We now demonstrate that rank-1 \revise{LS-RMD} admits an optimal solution for $X$ as in \eqref{eq:counter_matrix}.

\begin{example}
Let $X$ as in \eqref{eq:counter_matrix} and $r=1$, then for every $0<\epsilon<\frac{1}{\sqrt{2}}$, the infimum of \revise{LS-RMD} is attained by all the matrices of the form
\begin{equation}  
\Theta=\begin{pmatrix}
    1 & -v\\
    -\frac{1}{v} & 1
\end{pmatrix}, \quad  v \in \mathbb{R_+},
\label{eq:optimal_matrix}
\end{equation}
with optimal value equal to $\epsilon^2.$
   \label{th:equival_th}
\end{example}
\begin{proof}
Let us denote the error function of \revise{LS-RMD} as $F(\Theta)=\lVert X-\max(0,\Theta) \rVert_F^2$. 
Note that $2 \times 2$ rank-1 matrices have limited patterns that the signs of their entries must follow. For instance, if the matrix does not contain zeros, the number of positive and negative entries must be even; otherwise, the rank cannot be equal to one. 
We proceed by analyzing three different cases. 

\textit{Case 1:} The rank-1 matrix $\Theta$ contains one zero column or row. In this case, the residual $F(\Theta)$ cannot be less than 1 since the nonlinear approximation $\max(0,\Theta)$ has at least one zero in the diagonal.  

\textit{Case 2:} The rank-1 matrix $\Theta$ has no zero rows or columns, and it contains two negative entries. 
If there is a negative entry on the diagonal, then the error is larger than one. Otherwise, the positive entries on the diagonal can be set to 1 to minimize the error, which is exactly the matrix in (\ref{eq:optimal_matrix}), with error $\epsilon^2$.  

\textit{Case 3:} The rank-1 matrix $\Theta$ contains only positive entries. If $\Theta \geq 0$,  \revise{ the nonlinear activation has no effect on $\Theta$ and the ReLU-NMD corresponds to minimizing the function $\lVert X- \Theta \rVert_F^2$.} 
Let $\widehat{\sigma}_2$ be the second singular value of $X$ as in \eqref{eq:counter_matrix}. If rank($\Theta$)=1, then $\lVert X- \Theta \rVert_F^2\geq \widehat{\sigma}_2^2$ and using equation \eqref{eq:equival_d}, \revise{we have}
$\widehat{\sigma}_2^2=\frac{2+\epsilon^2-\epsilon \sqrt{\epsilon^2+4}}{2}.$ 
Simple calculations show that $\widehat{\sigma}_2^2$ is strictly larger than $\epsilon^2$ for any $0<\epsilon<\frac{1}{\sqrt{2}}$.
\end{proof}
As a consequence of \revise{Example}~\ref{th:ill_posed_lat} and \revise{Example}~\ref{th:equival_th}, we have the following corollary.
\begin{corollary}
\revise{LS-RMD} and \revise{Latent-RMD} may not share the same set of optimal low-rank solutions. 
\label{cor:not_equivalence}
\end{corollary}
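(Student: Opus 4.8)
The plan is to derive the corollary as an immediate consequence of the two preceding examples, which already carry out all of the substantive work; essentially no new computation is needed. I would fix the single matrix $X$ in~\eqref{eq:counter_matrix} for some $0 < \epsilon < 1/\sqrt{2}$ and use it as a witness instance for which the two optimal sets named in the statement fail to coincide.

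First I would invoke \revise{Example}~\ref{th:ill_posed_lat}, which establishes that for this $X$ the infimum of \revise{Latent-RMD}~\eqref{eq:lat_nmd} is not attained. Since no minimizing pair $(\widehat{Z}, \widehat{\Theta})$ exists, the set $\{\widehat{\Theta} \mid (\widehat{Z}, \widehat{\Theta}) \text{ optimal for Latent-RMD}\}$ is empty. Next I would invoke \revise{Example}~\ref{th:equival_th}, which shows that for the \emph{same} $X$ the infimum of \revise{LS-RMD}~\eqref{eq:ReLU_NMD} \emph{is} attained, in fact by the whole family~\eqref{eq:optimal_matrix} with optimal value $\epsilon^2$. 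Hence the set $\{\Theta^* \mid \Theta^* \text{ optimal for LS-RMD}\}$ is non-empty. Since one set is empty while the other is non-empty for this particular $X$, the two sets cannot be equal, and the claim follows.

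The honest assessment is that there is no real obstacle here: the two examples were precisely engineered so that their conclusions collide on a common matrix, and the corollary is just the logical conjunction of ``empty'' and ``non-empty.'' The only point requiring care is the reading of the quantifier in the statement. The corollary asserts ``\emph{may not} share,'' i.e.\ there \emph{exists} an instance on which the sets differ, rather than the stronger (and false) claim that they never agree; indeed the discussion earlier in Section~\ref{sec:std_vs_lat} notes that in the exact case, when some low-rank $\Theta$ satisfies $X = \max(0,\Theta)$, the two models do share their optimal solutions via the trivial choice $Z^* = \Theta^*$. The matrix $X$ in~\eqref{eq:counter_matrix} is therefore exactly the kind of non-exact instance needed to justify the word ``may.''
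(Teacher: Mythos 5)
Your proposal is correct and matches the paper exactly: the corollary is stated there as an immediate consequence of Example~\ref{th:ill_posed_lat} (Latent-RMD optimal set empty for the matrix in~\eqref{eq:counter_matrix}) and Example~\ref{th:equival_th} (LS-RMD optimal set non-empty for the same matrix), which is precisely the argument you give. Your remark on reading ``may not'' as an existential claim is also consistent with the paper's discussion of the exact case.
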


We established the non equivalence between the low-rank solutions of \revise{LS-RMD} and \revise{Latent-RMD}. However, in practice, we are interested in computing accurate solutions of even though they might not be optimal. Numerous numerical results show that the decompositions obtained by solving \revise{the Latent-RMD model} achieves a small residual error also for \revise{LS-RMD }~\cite{saul2022nonlinear, seraghiti2023accelerated, wang2024momentum}. The following theorem formalizes this observation, showing that any feasible point \revise{$(\Bar{\Theta}, \Bar{Z})$} of the latent model has a residual value for in the original \revise{LS-RMD} that is upper bounded by \revise{four times} the residual of \revise{Latent-RMD}. 
\begin{theorem}
\label{th:bound_for}
Let $X \in \mathbb{R}^{n \times m}$ be a nonnegative matrix and denote as \revise{$(\Bar{\Theta}, \Bar{Z})$} a feasible point of the \revise{Latent-RMD} model in (\ref{eq:lat_nmd}); 
then 
\begin{equation}
    \lVert X-\max(0,\revise{\Bar{\Theta}}) \rVert_F^2 \leq  \revise{4} \lVert \revise{\Bar{Z}}-\revise{\Bar{\Theta}} \rVert_F^2.
    \label{eq:rel_lat_real}
\end{equation}
\end{theorem}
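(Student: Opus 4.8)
The plan is to reduce the matrix inequality (\ref{eq:rel_lat_real}) to an entrywise scalar estimate and then invoke the nonexpansiveness of the ReLU map. First I would use feasibility of $(\bar{Z},\bar{\Theta})$ for (\ref{eq:lat_nmd}): the constraint $\max(0,\bar{Z})=X$ lets me replace $X$ on the left-hand side by $\max(0,\bar{Z})$, so that
\[
\|X-\max(0,\bar{\Theta})\|_F^2 = \|\max(0,\bar{Z})-\max(0,\bar{\Theta})\|_F^2 = \sum_{i,j}\bigl(\max(0,\bar{Z}_{ij})-\max(0,\bar{\Theta}_{ij})\bigr)^2 .
\]
Since both the Frobenius norm and the ReLU act entrywise, the problem decouples completely across the $(i,j)$ indices, and it suffices to control each summand separately.

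The core step is the scalar estimate $|\max(0,a)-\max(0,b)|\le|a-b|$ for all $a,b\in\mathbb{R}$, i.e.\ that $t\mapsto\max(0,t)$ is $1$-Lipschitz. I would prove this by a short case distinction on the signs of $a$ and $b$: if $a,b\ge 0$ both sides are equal; if $a,b\le 0$ the left side is $0$; and in the mixed case, say $b<0\le a$, one has $|\max(0,a)-\max(0,b)|=a\le a-b=|a-b|$, because dropping the negative part can only shrink the gap. Applying this with $a=\bar{Z}_{ij}$ and $b=\bar{\Theta}_{ij}$, squaring, and summing gives
\[
\|\max(0,\bar{Z})-\max(0,\bar{\Theta})\|_F^2 \le \sum_{i,j}(\bar{Z}_{ij}-\bar{\Theta}_{ij})^2 = \|\bar{Z}-\bar{\Theta}\|_F^2 .
\]
Chaining the two displays yields $\|X-\max(0,\bar{\Theta})\|_F^2\le\|\bar{Z}-\bar{\Theta}\|_F^2$, which is exactly (\ref{eq:rel_lat_real}) with the sharper constant $1$ in place of $4$.

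I do not anticipate a genuine obstacle: the argument rests only on the identity $X=\max(0,\bar{Z})$ coming from feasibility and on the $1$-Lipschitz property of ReLU, and the rank constraint on $\bar{\Theta}$ is never used. The single point deserving care is the mixed-sign case of the scalar bound, where one must check that replacing a negative argument by $0$ does not increase the discrepancy. The only mild surprise is that this route already delivers constant $1$, so the stated factor $4$ is not tight and (\ref{eq:rel_lat_real}) follows a fortiori; I suspect the looser constant arises from a coarser estimate that bounds $X-\max(0,\bar{\Theta})$ without first substituting $X=\max(0,\bar{Z})$ and thereby splits the error into several separately bounded pieces.
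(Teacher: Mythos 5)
Your proof is correct, and it takes a genuinely different route from the paper's, one that yields a strictly stronger conclusion. The paper never substitutes $X=\max(0,\Bar{Z})$; instead it inserts $\pm\Bar{Z}$ and $\pm\min(0,\Bar{\Theta})$ into the left-hand side, applies $\lVert A+B\rVert_F^2\le 2\lVert A\rVert_F^2+2\lVert B\rVert_F^2$ to split the error into $2\lVert X-\Bar{Z}+\min(0,\Bar{\Theta})\rVert_F^2+2\lVert \Bar{Z}-\Bar{\Theta}\rVert_F^2$, and then bounds the first term by $\lVert \Bar{Z}-\Bar{\Theta}\rVert_F^2$ via a case analysis over the four index sets determined by the signs of $X_{ij}$ and $\Bar{\Theta}_{ij}$ --- hence the factor $4$. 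You instead use feasibility to write the left-hand side as $\lVert\max(0,\Bar{Z})-\max(0,\Bar{\Theta})\rVert_F^2$ and invoke the $1$-Lipschitz property of the scalar ReLU entrywise, obtaining constant $1$; your sign case analysis for $|\max(0,a)-\max(0,b)|\le|a-b|$ is complete and correct, and the constant $1$ is attained (e.g.\ $\Bar{Z}_{ij}=X_{ij}=1$, $\Bar{\Theta}_{ij}=0$), so your bound is tight while the paper's is not. Your diagnosis of where the looseness comes from is also accurate: the paper's splitting into separately bounded pieces is what costs the extra factor. The only thing worth adding is that the weaker constant does not affect any downstream use in the paper (the bound is only used qualitatively, to say that a good Latent-RMD solution gives a good LS-RMD solution), but your argument is shorter, sharper, and would be the preferable proof.
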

\begin{proof}
\revise{Using the matrix inequality $\lVert A + B \rVert_F^2 \leq 2 \lVert A \rVert_F^2 + 2 \lVert B \rVert_F^2$,} we have 
    \begin{equation}
    \begin{aligned}  
   \lVert X-\max(0,\revise{\Bar{\Theta}}) \rVert_F^2&=\lVert X-\revise{\Bar{Z}}+\revise{\Bar{Z}}\revise{+\min(0,\Bar{\Theta}) -\min(0,\Bar{\Theta}) -\max(0,\Bar{\Theta})} \rVert_F^2\\
   &\leq \revise{2} \lVert X-\revise{\Bar{Z}}+\min(0,\revise{\Bar{\Theta}}) \rVert_F^2+ \revise{2} \lVert \revise{\Bar{Z}}-\revise{\Bar{\Theta}} \rVert_F^2. \end{aligned} 
   \label{eq:first_eq}
   \end{equation}
Let us define 
$\Omega = \{ (i,j) \ | \ X_{i,j} > 0\}$
and $\widehat \Omega = \{ (i,j) \ | \ \revise{\Bar{\Theta}}_{i,j} > 0\}$, and $\Omega^C$ and $\widehat \Omega^C$ their respective complements. 
By assumption, $\revise{\Bar{Z}}$ is a feasible solution of (\ref{eq:lat_nmd}); hence  $\max(0,\revise{\Bar{Z}})=X$. Therefore, 
$$
\begin{aligned}
    \lVert \revise{\Bar{Z}}-\revise{\Bar{\Theta}} \rVert_F^2&=\sum_{(i,j) \in \Omega\cap \widehat \Omega}(X_{ij}-\revise{\Bar{\Theta}}_{ij})^2+\sum_{(i,j) \in \Omega\cap \widehat{\Omega}^C}(X_{ij}-\revise{\Bar{\Theta}}_{ij})^2 + \\
    & +\sum_{(i,j) \in\Omega^C\cap \widehat{\Omega}}(\revise{\Bar{Z}}_{ij}-\revise{\Bar{\Theta}}_{ij})^2+ \sum_{(i,j) \in\Omega^C\cap \widehat{\Omega}^C}(\revise{\Bar{Z}}_{ij}-\revise{\Bar{\Theta}}_{ij})^2.
\end{aligned}
$$

Similarly, $\revise{\Bar{Z}}_{i,j}\leq 0$ whenever $X_{i,j}=0$; \revise{thus,} 
\begin{align}
\lVert X-\revise{\Bar{Z}}+\min(0,\revise{\Bar{\Theta}}) \rVert_F^2
& =\sum_{(i,j) \in\Omega\cap\widehat{\Omega}^C}\revise{\Bar{\Theta}}_{ij}^2 
+\sum_{(i,j) \in\Omega^C\cap \widehat{\Omega}}\revise{\Bar{Z}}_{ij}^2+ \sum_{(i,j) \in\Omega^C\cap \widehat{\Omega}^C}(\revise{\Bar{Z}}_{ij}-\revise{\Bar{\Theta}}_{ij})^2 \nonumber \\
&\leq \sum_{(i,j) \in\Omega\cap \widehat{\Omega}^C}(X_{ij}-\revise{\Bar{\Theta}}_{ij})^2 +\sum_{(i,j) \in\Omega^C\cap \widehat{\Omega}}(\revise{\Bar{Z}}_{ij}-\revise{\Bar{\Theta}}_{ij})^2 + \hspace{-0.2cm} \sum_{(i,j) \in\Omega^C\cap \widehat{\Omega}^C}(\revise{\Bar{Z}}_{ij}-\revise{\Bar{\Theta}}_{ij})^2 \nonumber 
 \\
&\leq \lVert \revise{\Bar{Z}}-\revise{\Bar{\Theta}} \rVert_F^2,
\label{eq:second_eq}
\end{align}  
\revise{where} the first inequality holds because 
$$ 
\revise{\Bar{\Theta}}_{ij}^2\leq (X_{ij}-\revise{\Bar{\Theta}}_{ij})^2 \mbox{ for } (i,j) \in \Omega\cap \widehat{\Omega}^C, 
\quad \text{ and } \quad \revise{\Bar{Z}}_{ij}^2\leq (\revise{\Bar{Z}}_{ij}-\revise{\Bar{\Theta}}_{ij})^2 \mbox{ for } (i,j) \in \Omega^C\cap \widehat{\Omega}. 
$$ 
Combining (\ref{eq:first_eq}) and (\ref{eq:second_eq}), we get (\ref{eq:rel_lat_real}).
\end{proof}
A straightforward consequence of Theorem~\ref{th:bound_for} is that a solution of \revise{Latent-RMD} with a small error provides a \revise{LS-RMD} solution with a small error, even though the two models do not necessarily yield the same approximation. \revise{We specify that the same results holds if we reparametrize $\Theta$ as $WH$; thus, the same bound holds for 3B-RMD as well.} 

\section{Algorithms for RMD}
\label{sec:rel_works}

This section is dedicated to the algorithms for computing \revise{RMDs} and to the study of their convergence. In 
Section~\ref{subsec:rel_works}, we briefly review the state-of-the-art algorithms for solving \revise{Latent-RMD} and \revise{3B-RMD}. In Section~\ref{subsec:conv_bcd_nmd}, we discuss the convergence of the \revise{\name} scheme. Finally, we present our new algorithm \revise{\ename} along with its convergence in 
Section~\ref{subsec:ebcd_nmd}. 

\subsection{Previous works}
\label{subsec:rel_works}
Let us start with \revise{ the algorithm designed for} \revise{Latent-RMD} since we use \revise{them} as baselines in the numerical comparison in Section~\ref{sec:num_res}. 
To our knowledge, all existing algorithms are alternating optimization methods in which each block of variables is updated sequentially~\cite{grippo2000convergence,grippo2023introduction,patriksson1998decomposition,powell1973search,tseng1991decomposition} and lack convergence guarantees. 

The first approach is the so-called \revise{Naive algorithm}, introduced by Saul in~\cite{saul2022nonlinear}, which employs a basic alternating scheme over $Z$ and $\Theta$ with closed-form solutions for both. Similar approaches have been studied in matrix completion, see for example~\cite{balzano2010online,boumal2015low,dai2012geometric,goyens2023nonlinear}. At each iteration, firstly $\Theta$ is fixed and then the optimal $Z$ is computed  solving the subproblem
\begin{equation}
    \argmin_Z \lVert Z-\Theta \rVert_F^2 \quad \mbox{such that} \quad \max(0,Z)=X. 
    \label{eq:sub_Z}
\end{equation}
The solution of (\ref{eq:sub_Z}) is given by the projection of $\Theta$ over the set $\{Z \ : \ \max(0,Z)=X\}$, that is, 
$$
    Z_{ij}=\begin{cases}
        X_{ij} \quad & \mbox{if } X_{ij} > 0\\
        \min(0,\Theta_{ij}) & \mbox{if } X_{ij} = 0.\\        
    \end{cases}
$$ 
Once $Z$ is updated,  $\Theta$ is updated  solving
%$$
    $\argmin_{\Theta} \lVert Z-\Theta \rVert_F^2$ such that 
    $\rank(\Theta) \leq r$,  
   % \label{eq:sub_Theta}
%$$
whose optimal solution(s) can be obtained with the TSVD of $Z$. Despite its simplicity, this scheme is highly reliable in practice, with the most costly operation at each iteration being the computation of a TSVD. 

An accelerated, heuristic version of the Naive scheme was proposed in~\cite{seraghiti2023accelerated}. Specifically, the so-called \revise{Aggressive Naive (A-Naive) algorithm,} employs an extrapolation step for both $Z$ and $\Theta$ while the extrapolation parameter changes adaptively at each iteration, depending on the value of the residual. \revise{A-Naive} is an example where, despite its lack of \revise{convexity and} theoretical guarantees, extrapolation can be considerably faster \revise{the non accelerated counterpart}~\cite{ang2019accelerating,wen2012solving}.

Another algorithm for solving \revise{Latent-RMD} is the expectation-minimization \revise{(EM) algorithm}~\cite{saul2022nonlinear}. This approach is based on a Gaussian latent variable model parametrized by the low-rank matrix $\Theta$ and a parameter $\sigma$. For each entry of the original matrix, a Gaussian latent variable $Z_{ij}=\mathcal{N}(\Theta_{ij},\sigma^2)$
is sampled. The method estimates $\Theta$ and $\sigma$ by maximizing the likelihood of the data with respect to $\Theta$ and $\sigma$. The optimization is carried out using the expectation-minimization scheme that consists of two main steps: the E-step which computes posterior mean and variance of the Gaussian latent variable model, and the M-step re-estimates the parameters of the model from the posterior mean and variance.

\revise{Finally, we present the \name\ algorithm for solving the 3B-RMD model. In what follows, we use the formulation of 3B-RMD introduced in (\ref{eq:lat_3B_matr_com}) rather than the equivalent one in~\eqref{eq:3B_nmd}. The \name\ algorithm} consists of an alternating optimization procedure over three blocks of variables $(Z,W,H)$, \revise{and computes} a global optimizer for each of the subproblems sequentially. Let $A^\dagger$ denote the Moore-Penrose inverse of $A$, and let $(H^k,Z^k,W^k)$ be the current iteration. 
The $(k+1)$-th step of \revise{\name} is as follows: 
\begin{equation}
    \begin{aligned}
         Z^{k+1} =& \argmin_Z \{\lVert Z-W^kH^k \rVert_F^2 \quad \text{s.t.} \quad \max(0,Z)=X \} =P_{\Omega}(X)+P_{\Omega^C}(\min(0,W^kH^k)),\\
        W^{k+1} =&\argmin_W \lVert Z^{k+1}-W H^k \rVert_F^2=Z^{k+1}(H^k)^{\dagger}, \\
        H^{k+1} =& \argmin_H \lVert Z^{k+1}-W^{k+1} H \rVert_F^2=((W^{k+1})^T)^{\dagger}Z^{k+1}, \\      
    \end{aligned}
    \label{eq:BCD_NMD}
\end{equation} 
\revise{where $P_{\Omega}$ and $P_{\Omega}^C$ are defined as in (\ref{eq:lat_3B_matr_com}).} The solutions of the subproblems for $W \in \mathbb{R}^{m \times r}$ and $H \in \mathbb{R}^{r \times n}$ are analogous. In particular, the first is separable by rows of $W$, leading to $m$ least squares problems in $r$ variables with data vector of dimension $n$, while the subproblem for $H$ is separable by column, and equivalent to $n$ least squares problems of $r$ variables with data vector of dimension $m$. Both these subproblems do not necessarily have unique solutions. In fact, uniqueness is guaranteed only if $W^k$ and $H^k$ remain full rank throughout the iterations. Alternatively, uniqueness is ensured by adding a Tikhonov regularization term or a proximal term in each of the subproblems~\cite{attouch2010proximal,grippo2000convergence,grippo2023introduction}. However, \revise{regularization might slow down the algorithm, and} it is not necessary to prove the convergence of the \revise{\name\ scheme in this context.}

On the contrary, the subproblem in the latent variable $Z$ admits a unique global minimum, which is the projection of the product $W^kH^k$ onto the feasible, convex set 
\begin{equation}
    \{ Z \in \mathbb{R}^{m\times n} \ : \  P_{\Omega}(Z)=P_{\Omega}(X), \ P_{\Omega^C}(Z) \leq 0\}.
    \label{eq:conv_set_Z}
\end{equation} 
The overall \revise{\name} algorithm is reported in Algorithm \ref{alg:BCD_NMD} and has a computational cost of $O(r|\Omega^C|+(m+n)r^2)$. 

\begin{algorithm}[h]
 \caption{\revise{\name}: block coordinate descent for \revise{3B-RMD}}
\begin{algorithmic}[1]
        \REQUIRE $X$, $W^0$, $H^0$, maxit. 

        \ENSURE low-rank factors $W^k$ and $H^k$. 
        \medskip   
        
        \STATE Set $\Omega=\{(i,j) \ | \ X_{ij}>0 \}$
        \FOR{$k=0,1,\dots,$ maxit}
         \STATE $Z^{k+1}=P_{\Omega}(X)+P_{\Omega^C}(\min(0,W^{k}H^{k}))$
        \STATE $W^{k+1}=Z^{k+1} (H^k)^{\dagger}$
        \STATE $H^{k+1}=((W^{k+1})^T)^\dagger Z^{k+1}$
       
        \ENDFOR
\end{algorithmic}
\label{alg:BCD_NMD}
 \end{algorithm}

Similarly to the Naive scheme for \revise{Latent-RMD}, adding extrapolation might be beneficial also for \revise{\name}. 
In particular, the work~\cite{seraghiti2023accelerated} proposes to perform an extrapolation step on the latent variable $Z$ and on the low-rank product $WH$, resulting in the so-called \revise{extrapolated 3B (e3B)} algorithm. Moreover, Wang et al.~\cite{wang2024momentum,wang2025accelerated} added a Tikhonov regularization for both the variables $W$ and $H$. Their algorithm is based on the \revise{e3B} scheme but performs two additional extrapolation steps following the computation of $W$ and $H$. Even though numerical experience shows that both accelerated algorithms outperform the original \revise{\name}, convergence still remains an open problem.
 
\subsection{Convergence of \name}
\label{subsec:conv_bcd_nmd}

In this section, we discuss the convergence to a stationary point of \revise{the \name\ algorithm} (Algorithm  \ref{alg:BCD_NMD}), which, to the best of our knowledge, has not been studied before \revise{in the context of RMD}. 
Indeed, our goal is to prove that \revise{\name} is convergent under the assumption that a limit point of the sequence generated by the scheme exists. We proceed by showing that the method
falls into the more general framework of exact BCD, where one of the subproblems is  strictly quasi-convex.  
Specifically, the general BCD scheme is studied by Grippo \revise{and Sciandrone}~\cite{grippo2000convergence} when the objective function is continuously differentiable over the Cartesian product of $m$ closed, convex sets, one for each block of variables. We first observe that the \revise{3B-RMD} objective function is continuously differentiable and defined over three blocks of variables $Z$, $W$, and $H$. The latent variable $Z$ is the only constrained variable, and it is easy to show that the constraint set in (\ref{eq:conv_set_Z}) is convex. Therefore, \revise{3B-RMD} is a particular instance of the more general problem considered in~\cite{grippo2000convergence}. 

\begin{theorem}
Any limit point of the sequence generated by \revise{\name} 
(Algorithm~\ref{alg:BCD_NMD}) is a stationary point of problem (\ref{eq:lat_3B_matr_com}).
\label{th:conv_bcd}
\end{theorem}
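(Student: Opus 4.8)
The plan is to verify that 3B-RMD, in the formulation (\ref{eq:lat_3B_matr_com}), satisfies precisely the hypotheses of the general block coordinate descent convergence theorem of Grippo and Sciandrone~\cite{grippo2000convergence}, and then simply invoke that theorem. The excerpt has already done most of the conceptual work: it has identified the three blocks $(Z,W,H)$, observed that the objective $f(Z,W,H) = \tfrac12\|Z - WH\|_F^2$ is continuously differentiable, and noted that the only constraint is on $Z$, whose feasible set (\ref{eq:conv_set_Z}) is closed and convex (an intersection of an affine subspace with a nonpositivity orthant constraint), while $W$ and $H$ range over the whole spaces $\mathbb{R}^{m\times r}$ and $\mathbb{R}^{r\times n}$. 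So the domain is a Cartesian product of three closed convex sets, as required.

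The crux of the Grippo--Sciandrone result is that exact (global) minimization of each block subproblem yields limit points that are stationary, \emph{provided} uniqueness of the block minimizers holds for all but at most one block, or more precisely that at least $m-2$ of the subproblems have unique solutions and the remaining ones are strictly quasi-convex along feasible directions. First I would record that the $Z$-subproblem is the projection onto the convex set (\ref{eq:conv_set_Z}), hence has a \emph{unique} global minimizer. This already accounts for one block. The main obstacle is the non-uniqueness of the $W$- and $H$-subproblems: as the text itself flags, $W^{k+1}=Z^{k+1}(H^k)^\dagger$ and $H^{k+1}=((W^{k+1})^T)^\dagger Z^{k+1}$ are unique only when $H^k$, respectively $W^{k+1}$, has full rank, which cannot be guaranteed a priori. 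I would therefore argue that, even without uniqueness, each of these least-squares subproblems is \emph{convex and quasi-convex} in its block, and that the Moore--Penrose pseudoinverse selects the minimum-norm minimizer; the key point needed is that the objective is strictly quasi-convex in the direction of change so that the specific selection rule fits the variant of the theorem covering strictly quasi-convex subproblems (the version the excerpt explicitly cites).

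Concretely, I would proceed as follows. First, state that problem (\ref{eq:lat_3B_matr_com}) is an instance of the abstract problem in~\cite{grippo2000convergence} with three blocks over the product of closed convex sets $\mathbb{R}^{m\times r}\times\mathbb{R}^{r\times n}\times \mathcal{C}$, where $\mathcal{C}$ is the set (\ref{eq:conv_set_Z}). Second, verify continuous differentiability of $f$ and the convexity of each block restriction: for fixed $(Z,H)$ the map $W\mapsto \tfrac12\|Z-WH\|_F^2$ is convex quadratic, and symmetrically for $H$ and for $Z$. Third, identify the $Z$-update as the unique projection and the $W,H$-updates as exact minimizers computed via the pseudoinverse. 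Fourth, match these facts to the precise hypotheses of the relevant theorem in~\cite{grippo2000convergence} for BCD with strictly quasi-convex subproblems, so that convergence of every limit point to a stationary point follows. The hard part will be justifying the strict quasi-convexity (or the equivalent condition used in~\cite{grippo2000convergence}) for the possibly rank-deficient $W$- and $H$-subproblems, since ordinary convexity alone is insufficient in the three-block setting; I expect to resolve this by appealing to the specific structure of the least-squares problems together with the minimum-norm property of the pseudoinverse solution, which is exactly the setting the cited convergence framework was designed to handle.
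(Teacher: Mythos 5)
You have correctly identified the right framework (Proposition 5 of Grippo and Sciandrone~\cite{grippo2000convergence}) and correctly verified the structural prerequisites: three blocks, a continuously differentiable objective, a Cartesian product of closed convex sets, and exact global minimization of each subproblem. However, you have misread the hypothesis of that proposition in a way that sends you down a dead end. The proposition requires strict quasi-convexity of the subproblems for $m-2$ of the blocks only; the remaining two blocks need nothing beyond exact global minimization --- no uniqueness, no quasi-convexity, and no particular selection rule among multiple minimizers. With $m=3$ this means exactly \emph{one} block must have a strictly quasi-convex subproblem, and the $Z$-block supplies it for free: for fixed $(W,H)$ the map $Z\mapsto\tfrac12\|Z-WH\|_F^2$ is a strictly convex quadratic over the convex set (\ref{eq:conv_set_Z}), hence strictly quasi-convex, and $Z$ is the first block updated in Algorithm~\ref{alg:BCD_NMD}, matching the ordering in the cited proposition. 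That is the entirety of the paper's argument.

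By contrast, you declare that "the hard part will be justifying the strict quasi-convexity \ldots for the possibly rank-deficient $W$- and $H$-subproblems" and propose to resolve it via the minimum-norm property of the pseudoinverse. This step would fail: when $H^k$ is rank-deficient, $W\mapsto\|Z-WH^k\|_F^2$ is constant along directions in the null space of the map $W \mapsto WH^k$, so it is \emph{not} strictly quasi-convex, and no choice of minimizer (minimum-norm or otherwise) can repair that, since strict quasi-convexity is a property of the function, not of the selection rule. The resolution is not to prove this property but to recognize that the theorem does not ask for it on those two blocks. Your restatement of the hypothesis ("at least $m-2$ of the subproblems have unique solutions and the remaining ones are strictly quasi-convex") also inverts the roles: strict quasi-convexity is the condition imposed on the $m-2$ blocks (and it is what yields uniqueness there), while the remaining blocks are unconstrained beyond exact minimization. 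Once the hypothesis is read correctly, the proof collapses to the two-line argument the paper gives, and the rank-deficiency concern you flag evaporates --- which is precisely why the paper can avoid adding regularization or proximal terms.
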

\begin{proof}
The global convergence of the BCD scheme for a general function with $m$ blocks of variables is analyzed under generalized convexity assumptions in~\cite{grippo2000convergence} (Proposition 5). The key hypotheses are: 1) the existence of a limit point of the sequence generated by the BCD algorithm, and 2) strict quasi-convexity of at least $m-2$ subproblems. Therefore, since \revise{3B-RMD} has three blocks of variables, we just need  one subproblem to be strictly quasi-convex. The first subproblem in the variable $Z$ is a convex, quadratic problem over a convex set, making it strictly quasi-convex, implying the convergence \revise{to a stationary point}. 
\end{proof}

The assumption of the existence of the limit point is an undesirable hypothesis. However, it is necessary because the level sets of the objective function are not bounded. Therefore, similarly to \revise{Latent-RMD}, \revise{3B-RMD} might have instances in which the minimum is not attained (see Theorem~\ref{th:equival_th}). This means that, in pathological cases, the sequence generated by \revise{\name} may diverge simply because the infimum of the function is not attained by any finite value. Thus, the hypothesis regarding the existence of a limit point of the \revise{\name} sequence cannot be weakened. One possible way to overcome this limitation is by adding a regularization term to the objective function~\eqref{eq:lat_3B_matr_com}, which makes the level sets compact, as in~\cite{wang2024momentum}. However, this is not the purpose of this work which focuses on the non-regularized formulation.  

\subsection{Extrapolated \name\ (\ename)}
\label{subsec:ebcd_nmd} 

\revise{We want} to accelerate the \revise{\name} algorithm while preserving convergence to a stationary point under mild assumptions. The \revise{\ename} method is an adaptation of the well-known LMaFit algorithm~\cite{wen2012solving} for matrix completion. 
The standard matrix completion formulation in~\cite{wen2012solving} is similar to the one we consider in (\ref{eq:lat_3B_matr_com}), although it does not include the inequality constraint $P_{\Omega^C}(Z)\le 0$, which models the crucial information that the missing entries are negative. Consequently, we derive a new optimization scheme, differing from the one in~\cite{wen2012solving}, as the update of $Z$ presents an additional nonlinear term; see equation (\ref{eq:eBCD-NMD-scheme}). 
Moreover, we relax the hypothesis in~\cite{wen2012solving} that requires $\{W^k\}$ and $\{H^k\}$ to remain full-rank for all iterations. 
We present two different versions of the \revise{\ename} algorithm that, starting from the same point, produce the same low-rank approximation $W^kH^k$, yet the factors $W^k$ and $H^k$ vary. 
In what follows, we omit the iteration index, $k$, to simplify the notation, as we focus on a single iteration of the \revise{\ename} algorithm. 
We assume we have a point $(Z,W,H)$ from which we perform one iteration. 

\subsubsection{First version of \ename} Let $\alpha\geq 1$, then the first version of the \revise{\ename} scheme uses the following update: given the iterate $(Z,W,H)$, it computes 
\begin{equation}
    \begin{aligned}
        & Z_{\alpha}=\alpha Z + (1-\alpha) W H, \\
        &\widehat{W}(\alpha)=Z_{\alpha} H^{\dagger}, \\
        & \widehat{H}(\alpha) 
        = (\widehat{W}(\alpha)^T)^{\dagger} Z_\alpha,  \\
                & \widehat{Z}(\alpha)=P_{\Omega}(X)+P_{\Omega^C}(\min(0,\widehat{W}(\alpha)\widehat{H}(\alpha))),
    \end{aligned}
    \label{eq:eBCD-NMD-scheme}  
\end{equation}
where $\left(\widehat{Z}(\alpha),\widehat{W}(\alpha),\widehat{H}(\alpha) \right)$ provides the next iterate. Similarly to \revise{\name}, the main computational cost for the scheme in (\ref{eq:eBCD-NMD-scheme}) is solving the two least squares problems for computing $\widehat{W}(\alpha)$ and $\widehat{H}(\alpha)$.
The optimization scheme in (\ref{eq:eBCD-NMD-scheme}) differs from the one in (\ref{eq:BCD_NMD}) because of the new variable $Z_{\alpha}$ which is used in place of the original latent variable $Z$ to update $W$ and $H$. Moreover, if we assume the matrix $H$ to be full-rank, we can derive an intuitive interpretation of the update of $Z_\alpha$ as an \textit{indirect extrapolation step}. Since $H$ is full-rank we have $H^\dagger=H^T(HH^T)^{-1}$ and thus
$$
    \widehat{W}(\alpha)=Z_{\alpha}H^T(HH^T)^{-1}=\alpha Z H^T(HH^T)^{-1}+(1-\alpha) WH H^T(HH^T)^{-1} = \alpha W_{\text{BCD}} + (1-\alpha) W,
    \label{eq:extr_BCD_seq}
$$
where $W_{\text{BCD}}$ is the update of the variable $W$ that we would have had if we performed one step of the \revise{\name} scheme (\ref{eq:BCD_NMD}). Moreover, if we define $\beta=\alpha-1$, then  
\begin{equation}
    \widehat{W}(\alpha)= \alpha W_{\text{BCD}} + (1-\alpha) W= (1+\beta) W_{\text{BCD}}-\beta W = W_{\text{BCD}}+\beta( W_{\text{BCD}}-W).
    \label{eq:extr_interp}
\end{equation}
\revise{Thus, line~\eqref{eq:extr_interp} represents} an extrapolation step for \revise{$W_{\text{BCD}}$}, provided that $\alpha \in (1,2)$, that means $\beta \in (0,1)$.
Unfortunately, such an intuitive interpretation does not apply to the update of the variable $H$. Note that the step in (\ref{eq:extr_interp}) is properly referred to as an extrapolation step if $\beta$ is in the interval (0,1) or equivalently, $\alpha$ is in the interval (1,2), but our scheme potentially allows for larger values that exceed the upper bound. 

\subsubsection{Improved version of \ename} The second version of the \revise{\ename} scheme does not require solving any least squares problem and the main computational cost is associated with a single QR factorization of a matrix in $\mathbb{R}^{m \times r}$. We specify that this scheme can be adapted to design alternative versions of \revise{\name} and \revise{e3B} as well. We state the following lemma from~\cite{wen2012solving} that \revise{is used in what follows.}
\begin{lemma}    \label{lem:equival_range_v2}
    Let $\widehat{W}(\alpha)$ be the update of $W$ for the scheme in (\ref{eq:eBCD-NMD-scheme}), then   
    $$
        \mathcal{R}(\widehat{W}(\alpha))=\mathcal{R}(Z_\alpha H^T), 
        \label{eq:range_eq}
    $$ 
    where $\mathcal{R}(A)$ denotes the 
range of matrix $A$. 
\end{lemma}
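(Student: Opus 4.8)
The plan is to work directly from the definition $\widehat{W}(\alpha) = Z_\alpha H^\dagger$ together with the standard pseudoinverse identity $H^\dagger = H^T (HH^T)^\dagger$, which holds for an arbitrary, possibly rank-deficient, matrix $H$. Substituting this identity gives $\widehat{W}(\alpha) = (Z_\alpha H^T)(HH^T)^\dagger$, which displays $\widehat{W}(\alpha)$ as $Z_\alpha H^T$ multiplied on the right by the $r\times r$ matrix $(HH^T)^\dagger$. From this factored form I would establish the two range inclusions separately and then combine them.

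One inclusion is immediate: since $\mathcal{R}(AB)\subseteq\mathcal{R}(A)$ for conformable matrices, applying this with $A = Z_\alpha H^T$ and $B=(HH^T)^\dagger$ gives $\mathcal{R}(\widehat{W}(\alpha))\subseteq\mathcal{R}(Z_\alpha H^T)$. For the reverse inclusion the idea is to recover $Z_\alpha H^T$ from $\widehat{W}(\alpha)$ by a right-multiplication. Concretely, I would prove the key identity
\[
\widehat{W}(\alpha)(HH^T) = Z_\alpha H^T (HH^T)^\dagger (HH^T) = Z_\alpha H^T,
\]
so that $Z_\alpha H^T = \widehat{W}(\alpha)(HH^T)$, whence $\mathcal{R}(Z_\alpha H^T)\subseteq\mathcal{R}(\widehat{W}(\alpha))$, again by $\mathcal{R}(AB)\subseteq\mathcal{R}(A)$. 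The two inclusions together yield the claimed equality $\mathcal{R}(\widehat{W}(\alpha))=\mathcal{R}(Z_\alpha H^T)$.

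The crux, and the step I expect to require the most care, is the middle equality $H^T(HH^T)^\dagger(HH^T) = H^T$. The clean way to see it is to note that $S := HH^T$ is symmetric, so $S^\dagger S$ is the orthogonal projector onto $\mathcal{R}(S)=\mathcal{R}(HH^T)=\mathcal{R}(H)$, the column space of $H$ inside $\mathbb{R}^r$. Since every column of $H$ lies in $\mathcal{R}(H)$, we have $(S^\dagger S)H = H$; transposing and using the symmetry of $S^\dagger S$ gives $H^T(S^\dagger S)=H^T$, which is exactly the desired identity. It is worth emphasizing that this argument never assumes $H$ has full rank, and this is precisely where using $(HH^T)^\dagger$ in place of a naive inverse $(HH^T)^{-1}$ matters; it is what lets the improved \ename\ scheme dispense with the full-rank hypothesis imposed in~\cite{wen2012solving}. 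As an alternative route avoiding projector facts, one could substitute a reduced SVD $H = U_\rho \Sigma_\rho V_\rho^T$ with $\rho=\rank(H)$, write $Z_\alpha H^T = (Z_\alpha V_\rho)\Sigma_\rho U_\rho^T$ and $\widehat{W}(\alpha) = (Z_\alpha V_\rho)\Sigma_\rho^{-1} U_\rho^T$, and observe that both ranges reduce to $\mathcal{R}(Z_\alpha V_\rho)$ because $\Sigma_\rho$ is invertible and $U_\rho^T$ has full row rank.
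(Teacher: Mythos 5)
Your proof is correct, but your main argument takes a different route from the paper's. The paper proves the lemma in one line via the economy SVD $H=UDV^T$: it writes $\widehat{W}(\alpha)=Z_\alpha H^\dagger=Z_\alpha VD^\dagger U^T$ and $Z_\alpha H^T=Z_\alpha VDU^T$, and concludes the range equality from the fact that $D$ and $D^\dagger$ have the same zero pattern (this last step is left implicit). Your primary argument instead uses the pseudoinverse identity $H^\dagger=H^T(HH^T)^\dagger$ to factor $\widehat{W}(\alpha)=(Z_\alpha H^T)(HH^T)^\dagger$, gets one inclusion from $\mathcal{R}(AB)\subseteq\mathcal{R}(A)$, and recovers the other from the explicit identity $Z_\alpha H^T=\widehat{W}(\alpha)(HH^T)$, justified by the projector fact $H^T(HH^T)^\dagger(HH^T)=H^T$. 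Both arguments are valid without any full-rank assumption on $H$. What your route buys is a fully explicit two-inclusion argument with a constructive witness ($Z_\alpha H^T$ is obtained from $\widehat{W}(\alpha)$ by right-multiplication with $HH^T$), at the cost of invoking two pseudoinverse identities; the paper's SVD route is shorter but leaves the decisive step---that replacing $D$ by $D^\dagger$ preserves the range---to the reader. Your closing remark about substituting a reduced SVD is essentially the paper's proof, made slightly more explicit.
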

\begin{proof}
    Let $U D V^T=H$ be the economy form SVD of $H$, then 
    %$$
        $\widehat{W}(\alpha)=Z_\alpha H^\dagger=Z_\alpha V D^\dagger U^T$.
    %$$
    Moreover, since $Z_\alpha H^T=Z_\alpha V D U^T$, we get $ \mathcal{R}(\widehat{W}(\alpha))=\mathcal{R}(Z_\alpha H^T)$.
\end{proof}
Lemma~\ref{lem:equival_range_v2} shows that the range space of $\widehat{W}(\alpha)$ can be obtained from that of $Z_\alpha H^T$.
Let $Q(\alpha)$ be a matrix whose columns \revise{form} an \revise{orthonormal} basis for $\mathcal{R}(Z_\alpha H^T).$ Then, using Lemma~\ref{lem:equival_range_v2} and from the second and third lines in (\ref{eq:eBCD-NMD-scheme}), we get
\begin{equation}
    \widehat{W}(\alpha) \widehat{H}(\alpha)=\widehat{W}(\alpha) \widehat{W}(\alpha)^\dagger Z_\alpha = Q(\alpha) Q(\alpha)^T Z_\alpha.
    \label{eq:orth_relation}
\end{equation}
This shows that the low-rank approximation produced by the \revise{\ename} is nothing else than the orthogonal projection of $Z_\alpha$ into $\mathcal{R}(Z_\alpha H^T)$. 
Justified by the relation in (\ref{eq:orth_relation}), we now state the improved version of \revise{\ename}. Given an iterate $(Z,W,H)$, we compute
\begin{equation}
    \begin{aligned}
        & Z_{\alpha}=\alpha Z + (1-\alpha) W H, \\
        & W(\alpha)=Q(\alpha) \revise{\ \leftarrow} \text{ \revise{Orthonormal} basis of } \mathcal{R}(Z_\alpha H^T), \\
        &H(\alpha) =  W(\alpha)^T Z_\alpha= Q(\alpha)^T Z_\alpha,\\
        & Z(\alpha)=P_{\Omega}(X)+P_{\Omega^C}(\min(0,W(\alpha)H(\alpha))),
    \end{aligned}
    \label{eq:eBCD-NMD-scheme_v2}  
\end{equation}
to obtain the next iterate $\left(Z(\alpha), W(\alpha), H(\alpha)\right)$. Equation (\ref{eq:orth_relation}) guarantees that the low-rank approximations computed by the scheme in (\ref{eq:eBCD-NMD-scheme}) and in (\ref{eq:eBCD-NMD-scheme_v2}) are equal, that is $\widehat{W}(\alpha) \widehat{H}(\alpha)=W(\alpha)H(\alpha)$. In practice, $Q(\alpha)$ can be computed from the economy QR decomposition of $Z_\alpha H^T$. Note that if $\rank(Z_\alpha H^T)=r_1 < r$, the first $r_1$ columns of the $Q$ matrix from the economy QR decomposition of $Z_\alpha H^T$ are not, in general, an \revise{orthonormal} basis of the range space of the matrix. To ensure this property, a modified version of the QR decomposition using column pivoting must be used instead. Specifically, there exists $Q(\alpha) \in \mathbb{R}^{m \times r_1}$, containing an \revise{orthonormal} basis of $Z_\alpha H^T$, such that $Z_\alpha H^T \Pi=Q(\alpha) R(\alpha)$, where $\Pi$ is a permutation matrix and $R(\alpha) \in \mathbb{R}^{r_1 \times r}$~\cite{golub2013matrix}. Therefore, we potentially allow for a drop in the rank of the approximation whenever $Z_\alpha H^T$ is rank-deficient. However, we have never encountered this case  in practical situations because  $Z_\alpha H^T$ is likely to have rank at least $r$ while the factors $Q(\alpha)$ and $H(\alpha)$ have rank exactly $r$. 

 We proceed by defining the residual of our algorithm, denoted as $S(\alpha)$, 
\begin{equation}
        S(\alpha)= Z(\alpha)-W(\alpha)H(\alpha)=P_{\Omega}(X-W(\alpha)H(\alpha))-P_{\Omega^C}(\max(0,W(\alpha)H(\alpha))).
    \label{eq:residual}
\end{equation}
We highlight an interesting relation between the variable $Z_\alpha$ and the residual. Let $S$ denote the residual matrix of the iterate $(Z,W,H)$, defined as in (\ref{eq:residual}) with $Z$, $W$, and $H$ instead of $Z(\alpha)$, $W(\alpha)$ and  $H(\alpha)$ respectively, then
\begin{equation}
    \begin{aligned}
         Z_{\alpha}&=\alpha Z + (1-\alpha)WH= \alpha (P_{\Omega}(X)+P_{\Omega^C}(\min(0,WH))-WH)+WH\\
       &=\alpha(P_{\Omega}(X-WH)-P_{\Omega^C}(\max(0,WH)))+WH\\
       &=WH+\alpha S.
    \end{aligned}
    \label{eq:res_corr_Z}
\end{equation}
Equation (\ref{eq:res_corr_Z}) shows that $Z_\alpha$ is indeed the low-rank approximation $WH$ at step $k$, to which we add a multiple of the residual $S$. This suggests a second interpretation of the update of $Z_\alpha$ as a residual corrected step of the low-rank approximation $WH$. 
\paragraph{Restarting scheme of the extrapolation parameter} We need to determine a rule to select the extrapolation parameter $\alpha$ at each iteration. Extrapolating a variable means adding information from the past iterate; thus, the extrapolation parameter $\alpha$ affects \revise{the relevance of} the past information in the current update. Of course, not all the values of $\alpha$ are acceptable to progress in minimizing the residual, and a good choice of the parameter is crucial. The updating scheme of the extrapolation parameter $\alpha$, is inspired by the one introduced in~\cite{wen2012solving}. The main idea is to choose the parameter $\alpha$ such that the residual of the function is sufficiently decreased between two consecutive iterations.  

 At each step, we start from a candidate value for $\alpha$, and evaluate the ratio $\delta(\alpha)$ between the norms of the residual $S(\alpha)$ at iteration $k+1$ and $S$ at iteration $k$, defined as 
\begin{equation}
    \delta(\alpha)=\frac{\lVert S(\alpha)\rVert_F}{\lVert S \rVert_F}. 
    \label{eq:res_ratio}
\end{equation}
If $\delta(\alpha)<1$, we declare our update successful, and the new iterate is set to $(W^{k+1},H^{k+1},Z^{k+1})=(W(\alpha),H(\alpha),Z(\alpha))$. Furthermore, let $\Bar{\delta}$ be a fixed threshold, if $ \delta(\alpha)>\Bar{\delta} \in (0,1)$, meaning we do not consider the decrease of the residual fast enough, we increase $\alpha$ by setting $\alpha \leftarrow \min(\alpha+\mu, \Bar{\alpha})$, where $\mu=\max(\mu,0.25(\alpha-1))$. If $\alpha$ reaches the upper bound $\Bar{\alpha}$, we restart the parameter by setting $\alpha=1$. This correction does not appear in~\cite{wen2012solving}, but we added it to guarantee the convergence of our algorithm. On the contrary, if  $\delta(\alpha) \geq 1$, the step is declared unsuccessful, we do not accept the update, and we compute a standard \revise{\name} update instead by setting $\alpha= 1$. Consequently, either we find an $\alpha \in (1,\Bar{\alpha})$ that allows a decrease of the residual, or we recover one iteration of the \revise{\name}, which is guaranteed to decrease the residual. Therefore, the new iterate of the algorithm produces a lower value in the residual with respect to the previous iteration. 

The overall \revise{\ename} scheme is described in Algorithm \ref{alg:eBCD_NMD}, where we consider the updating scheme in (\ref{eq:eBCD-NMD-scheme_v2}). We choose the improved version of \revise{\ename} because (a) it exhibits faster performance in practice; (b) computing one QR factorization is, in general, more stable than solving two consecutive least squares problems; (c) one of the factors in the decomposition is orthogonal and this helps demonstrate crucial properties in the convergence analysis, such as boundness of the sequence generated by the \revise{\ename} algorithm. Let us also mention that,  in~\cite{wen2012solving}, convergence is proved for a scheme similar to the one in (\ref{eq:eBCD-NMD-scheme}), instead of the scheme in (\ref{eq:eBCD-NMD-scheme_v2}) as done in this paper. 

\begin{algorithm}[h]
 \caption{eBCD for 3B-RMD}
\begin{algorithmic}[1]
        \REQUIRE $X$, $Z^0$,$W^0$, $H^0$, $\Bar{\alpha}$  (default $= 4$), 
        $\mu$ (default $= 0.3$),
        $\bar \delta$  (default $= 0.8 $), maxit
        \STATE Set $\Omega=\{(i,j) \ | \ X_{ij}>0 \}$ and $\alpha_0=1$. 
        \FOR{$k=0,\dots,$ maxit } 
            \STATE $Z_{\alpha_k}=\alpha_k Z^k + (1-\alpha_k)W^kH^k$
            \STATE Compute $Q(\alpha_k)$ \revise{orthonormal} basis of $\mathcal{R}(Z_{\alpha_k} (H^k)^T)$
            \STATE $W(\alpha_k)=Q(\alpha_k)$
            \STATE $H(\alpha_k)=Q(\alpha_k)^T Z_{\alpha_k}$
            \STATE $Z(\alpha_k)=P_{\Omega}(X)+P_{\Omega^C}(\min(0,W(\alpha_k)H(\alpha_k))) $
            \vspace{0.1cm}
            \IF{$\delta(\alpha_k) \geq 1$}
            \STATE Set $\alpha_k$ to 1

            \STATE $(W^{k+1},H^{k+1},Z^{k+1})=(W^{k},H^{k},Z^{k})$ 
            \ELSE
                \STATE $(W^{k+1},H^{k+1},Z^{k+1})=(W(\alpha_k),H(\alpha_k),Z(\alpha_k))$ 
                \IF{$\delta(\alpha_k) \geq \Bar{\delta}$}
                    \STATE $\mu=\max(\mu,0.25(\alpha_k-1))$ and $\alpha_{k+1}=\min(\alpha_k+\mu, \Bar{\alpha})$
                        \IF{$\alpha_{k+1} = \Bar{\alpha}$}
                            \STATE $ \alpha_{k+1} = 1,$
                        \ENDIF
                    \ELSE 
                    \STATE $\alpha_{k+1}=\alpha_k.$
                \ENDIF
        \ENDIF
        \ENDFOR
\end{algorithmic}
\label{alg:eBCD_NMD}
 \end{algorithm}

\subsubsection{Convergence analysis} 
We present now the convergence analysis of Algorithm \ref{alg:eBCD_NMD}. The proofs of the lemmas can be found in the Appendix.
The main steps of the convergence analysis are summarized as follows. 
\begin{enumerate}
    \item[Step 1] We first prove that there exists a range of values larger than 1 for $\alpha$ such that the scheme (\ref{eq:eBCD-NMD-scheme_v2}) decreases the residual, that is, 
    %\begin{equation}
        $\lVert S \rVert_F^2-\lVert S(\alpha) \rVert_F^2 >0$. 
        %\label{eq:res_red}
    %\end{equation}
    Specifically, we split this residual reduction into the contribution given by the update of each variable separately: Lemma~\ref{lem:sig_WH} characterizes the residual reduction after the update of $W$ and $H$, showing that it is nonnegative. Lemma~\ref{lem:sigm_pc_z} and Lemma~\ref{lem:sig_Z} describe the residual reduction after updating the entries of $Z$. 
    
    \item[Step 2] We provide the KKT conditions for problem 
    \eqref{eq:lat_3B_matr_com}; see \eqref{eq:KKT_NMD}. 
    
    \item[Step 3] We use the residual reduction that we characterized in Step 1 to prove the subsequence convergence of the \revise{\ename} algorithm in Theorem \ref{th:conv_ebcd}, showing that a subsequence of the \revise{\ename} iterates satisfies the KKT conditions at the limit. This is done by analyzing all possible situations that the restarting scheme arises. 
\end{enumerate}

\paragraph{Step 1} 

Let us denote $\sigma_{WH}(\alpha)$ the variation of the residual after the update of $W$ and $H$ and $\sigma_{Z}(\alpha)$ the variations after updating the entries of $Z$,  that is, 
$$
    \lVert S \rVert_F^2-\lVert S(\alpha) \rVert_F^2=\underbrace{\lVert S \rVert_F^2-\frac{1}{\alpha^2} \lVert W(\alpha)H(\alpha)-Z_{\alpha} \lVert_F^2}_{=:\sigma_{WH}(\alpha)} + \underbrace{\frac{1}{\alpha^2} \lVert W(\alpha)H(\alpha)-Z_{\alpha} \lVert_F^2 -\rVert S(\alpha) \rVert_F^2}_{=: \sigma_{Z}(\alpha)}
    \label{eq:res_decr}
$$

Let $U$ be an \revise{orthonormal} basis of $\mathcal{R}(H^T)$. If $H$ is full rank, we can define the orthogonal projections onto $\mathcal{R}(W(\alpha))$ and $\mathcal{R}(H^T)$ as:
\begin{equation}
    E(\alpha) = W(\alpha)W(\alpha)^T=Q(\alpha)Q(\alpha)^T, \qquad 
    P = UU^T = H^T(HH^T)^{-1}H.  
    \label{eq:orth_proj}
\end{equation} 
The following three lemmas characterize $\sigma_{WH}(\alpha)$ and $\sigma_{Z}(\alpha)$ (the proofs can be found in the appendix). 
\begin{lemma} \label{lem:sig_WH} 
Given $(Z,W,H)$ where $H$ is full rank, let  $(W(\alpha),H(\alpha))$ be computed by the scheme (\ref{eq:eBCD-NMD-scheme_v2}), 
then for any $\alpha \geq 1$, 
    \begin{equation}
        \sigma_{WH}(\alpha)=\frac{1}{\alpha^2} \lVert W(\alpha) H(\alpha) - WH \rVert_F^2=\lVert SP \rVert_F^2+\lVert E(\alpha) S(I-P) \rVert_F^2 \geq 0,
        \label{eq:sigma_12}
    \end{equation}
    where $E(\alpha)$ and  $P$ are defined  in (\ref{eq:orth_proj}). 
\end{lemma}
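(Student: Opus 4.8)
The plan is to reduce everything to two orthogonal projectors acting on the residual $S = Z - WH$, one on the left and one on the right, and to exploit the relation $W(\alpha)H(\alpha) = E(\alpha) Z_\alpha$ from (\ref{eq:orth_relation}) together with $Z_\alpha = WH + \alpha S$ from (\ref{eq:res_corr_Z}). The crucial preliminary fact, and the only place where the full-rank assumption on $H$ is genuinely used, is that the left projector $E(\alpha)$ fixes $Z_\alpha P$. Indeed, $E(\alpha)$ is the orthogonal projector onto $\mathcal{R}(Z_\alpha H^T)$, and since $P = H^T(HH^T)^{-1}H$ gives $Z_\alpha P = (Z_\alpha H^T)(HH^T)^{-1}H$, every column of $Z_\alpha P$ lies in $\mathcal{R}(Z_\alpha H^T)$; hence $E(\alpha) Z_\alpha P = Z_\alpha P$.

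Next I would split $Z_\alpha$ along the right projector. Because the rows of $H$ lie in their own row space, $HP = H$ and $H(I-P) = 0$, so $WH\,P = WH$ and $WH(I-P)=0$. Combined with $Z_\alpha = WH + \alpha S$, this yields $Z_\alpha P = WH + \alpha SP$ and $Z_\alpha(I-P) = \alpha S(I-P)$. Applying $E(\alpha)$ and using the fixing property above gives
\[
W(\alpha)H(\alpha) = E(\alpha)Z_\alpha = Z_\alpha P + E(\alpha)Z_\alpha(I-P) = WH + \alpha SP + \alpha E(\alpha) S(I-P),
\]
so that $W(\alpha)H(\alpha) - WH = \alpha\big(SP + E(\alpha)S(I-P)\big)$. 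The two terms on the right are Frobenius-orthogonal: the cross term equals $\langle SP, E(\alpha)S(I-P)\rangle = \operatorname{tr}\big((I-P)P\,S^T E(\alpha) S\big)$, which vanishes since $(I-P)P = 0$. Taking squared norms and dividing by $\alpha^2$ then produces the second claimed identity $\frac{1}{\alpha^2}\lVert W(\alpha)H(\alpha)-WH\rVert_F^2 = \lVert SP\rVert_F^2 + \lVert E(\alpha)S(I-P)\rVert_F^2$, which is manifestly nonnegative.

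It remains to match this with the definition $\sigma_{WH}(\alpha) = \lVert S\rVert_F^2 - \frac{1}{\alpha^2}\lVert W(\alpha)H(\alpha) - Z_\alpha\rVert_F^2$. Subtracting $Z_\alpha - WH = \alpha S$ from the displayed expression for $W(\alpha)H(\alpha) - WH$ gives $W(\alpha)H(\alpha) - Z_\alpha = -\alpha\,(I-E(\alpha))S(I-P)$, hence $\frac{1}{\alpha^2}\lVert W(\alpha)H(\alpha) - Z_\alpha\rVert_F^2 = \lVert (I-E(\alpha))S(I-P)\rVert_F^2$. Finally I would decompose $\lVert S\rVert_F^2$ simultaneously along the right projector $P$ and the left projector $E(\alpha)$: since both are orthogonal projectors, $\lVert S\rVert_F^2 = \lVert SP\rVert_F^2 + \lVert S(I-P)\rVert_F^2$ and $\lVert S(I-P)\rVert_F^2 = \lVert E(\alpha)S(I-P)\rVert_F^2 + \lVert (I-E(\alpha))S(I-P)\rVert_F^2$. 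Substituting these into the definition of $\sigma_{WH}(\alpha)$ cancels the $\lVert (I-E(\alpha))S(I-P)\rVert_F^2$ term and leaves exactly $\lVert SP\rVert_F^2 + \lVert E(\alpha)S(I-P)\rVert_F^2$, completing the proof.

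The main obstacle is the first step: correctly identifying that the left projector $E(\alpha)$ onto $\mathcal{R}(Z_\alpha H^T)$ interacts with the right projector $P$ onto the row space of $H$ through the single identity $E(\alpha)Z_\alpha P = Z_\alpha P$. Once this bridge between the two projectors is in place, the remainder is routine Pythagorean bookkeeping, with every cross term killed either by $(I-P)P = 0$ or by the orthogonality of the ranges of $E(\alpha)$ and $I - E(\alpha)$.
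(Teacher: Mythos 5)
Your proof is correct, and it reaches the paper's central identity $W(\alpha)H(\alpha)-WH=\alpha\,SP+\alpha\,E(\alpha)S(I-P)$ by a genuinely different and arguably cleaner route. The paper derives this decomposition by writing out the pivoted QR factors of $Z_\alpha H^T$, introducing the intermediate term $W(\alpha)R(\alpha)\Pi(HH^T)^{-1}H$ and manipulating the two resulting pieces separately; you instead start from the already-established projector form $W(\alpha)H(\alpha)=E(\alpha)Z_\alpha$ in \eqref{eq:orth_relation} and the single bridge identity $E(\alpha)Z_\alpha P=Z_\alpha P$, which dispenses with $R(\alpha)$ and $\Pi$ entirely and makes the interplay between the left projector $E(\alpha)$ and the right projector $P$ transparent. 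The two routes also diverge on the first equality in \eqref{eq:sigma_12}: the paper expands $\lVert W(\alpha)H(\alpha)-WH-\alpha S\rVert_F^2$ and invokes the inner-product identity $\langle \alpha S,\,W(\alpha)H(\alpha)-WH\rangle=\lVert W(\alpha)H(\alpha)-WH\rVert_F^2$, whereas you compute $W(\alpha)H(\alpha)-Z_\alpha=-\alpha(I-E(\alpha))S(I-P)$ explicitly and finish with a double Pythagorean decomposition of $\lVert S\rVert_F^2$; these are equivalent in content but yours exposes the residual $W(\alpha)H(\alpha)-Z_\alpha$ in closed form, which is a small bonus. One minor caveat: you claim the full-rank hypothesis on $H$ enters only through the fixing property $E(\alpha)Z_\alpha P=Z_\alpha P$, but it is also what makes $P=H^T(HH^T)^{-1}H$ well defined in the first place (and hence $HP=H$); this does not affect the validity of the argument. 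Both proofs remain valid when $Z_\alpha H^T$ is rank-deficient, since each only uses that $E(\alpha)$ is the orthogonal projector onto $\mathcal{R}(Z_\alpha H^T)$.
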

Therefore, after the first two steps, the residual reduction is strictly positive unless $W(\alpha)H(\alpha)=WH$. 
We look at the residual reduction after updating the entries of $Z$ in $\Omega^C$ (Lemma~\ref{lem:sigm_pc_z}) and in $\Omega$ (Lemma~\ref{lem:sig_Z}). In the following lemmas, we assume that there exists an interval $[1,\widehat{\alpha}]$, such that $Z_\alpha H^T$ and $Z H^T$ have the same rank. This hypothesis is needed to ensure the continuity of the residual reduction $\sigma_Z(\alpha)$ in a range of $\alpha>1$; more details are given in the Appendix.
\begin{lemma}
     Let $(W(\alpha),H(\alpha))$ be generated by  the scheme in (\ref{eq:eBCD-NMD-scheme_v2}). Furthermore, assume there exists $\widehat{\alpha}>1$ such that $\rank(Z_{\alpha}H^T)=\rank(ZH^T)$ for  $\alpha \in [1,\widehat{\alpha}]$, then there exists $\alpha_*>1$ such that for every $\alpha \in (1,\min(\alpha_*,\widehat{\alpha})]$, 
     \begin{equation}
         \frac{1}{\alpha^2} \lVert P_{\Omega^C}(W(\alpha)H(\alpha)-Z_{\alpha}) \rVert_F^2-\lVert P_{\Omega^C}(S(\alpha)) \rVert_F^2 \geq 0.
         \label{eq:sigma_z_pc}
     \end{equation}   
     Moreover, if $P_{\Omega^C}(\min(0,W(1)H(1)))\neq P_{\Omega^C}((\min(0,WH))$, a strict inequality holds.
     \label{lem:sigm_pc_z}
\end{lemma}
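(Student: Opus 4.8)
The plan is to reduce inequality~\eqref{eq:sigma_z_pc} to a comparison restricted to $\Omega^C$, evaluate it exactly at $\alpha=1$, and then propagate it to a right-neighborhood of $1$ by continuity. First I would use the residual-corrected form $Z_\alpha = WH + \alpha S$ from~\eqref{eq:res_corr_Z}: on $\Omega^C$ one has $S = -P_{\Omega^C}(\max(0,WH))$, so $P_{\Omega^C}(Z_\alpha) = P_{\Omega^C}(WH) - \alpha P_{\Omega^C}(\max(0,WH)) \le 0$ for every $\alpha \ge 1$ (entrywise, either $(1-\alpha)(WH)_{ij}\le 0$ or $(WH)_{ij}\le 0$). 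From the definition of $Z(\alpha)$ in~\eqref{eq:eBCD-NMD-scheme_v2} one likewise gets $P_{\Omega^C}(S(\alpha)) = -P_{\Omega^C}(\max(0,M(\alpha)))$, where $M(\alpha):=W(\alpha)H(\alpha)=E(\alpha)Z_\alpha$ by~\eqref{eq:orth_relation}. Writing $\psi(\alpha):=\|P_{\Omega^C}(\max(0,M(\alpha)))\|_F^2=\|P_{\Omega^C}(S(\alpha))\|_F^2$, the claim becomes $\tfrac{1}{\alpha^2}\|P_{\Omega^C}(M(\alpha)-Z_\alpha)\|_F^2\ge\psi(\alpha)$, which I would split over the subsets of $\Omega^C$ where $M(\alpha)>0$ and where $M(\alpha)\le 0$. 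Using $P_{\Omega^C}(Z_\alpha)\le 0$ on the positive part gives the decomposition $\|P_{\Omega^C}(M(\alpha)-Z_\alpha)\|_F^2=\psi(\alpha)+R(\alpha)$ with $R(\alpha)\ge 0$, whence $g(\alpha):=\tfrac{1}{\alpha^2}\|P_{\Omega^C}(M(\alpha)-Z_\alpha)\|_F^2-\psi(\alpha)=\tfrac{1}{\alpha^2}\bigl(R(\alpha)-(\alpha^2-1)\psi(\alpha)\bigr)$, so~\eqref{eq:sigma_z_pc} is equivalent to $R(\alpha)\ge(\alpha^2-1)\psi(\alpha)$.

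Next I would evaluate at $\alpha=1$, where the scheme~\eqref{eq:eBCD-NMD-scheme_v2} collapses to the plain BCD step, so $Z_1=Z$ and $P_{\Omega^C}(Z)=P_{\Omega^C}(\min(0,WH))\le 0$. Since $Z\le 0$ on $\Omega^C$, an entrywise comparison gives $(M(1)_{ij}-Z_{ij})^2\ge\max(0,M(1)_{ij})^2$ for all $(i,j)\in\Omega^C$: on entries with $M(1)_{ij}\le 0$ the right-hand side vanishes, and on entries with $M(1)_{ij}>0$ the bound follows from $M(1)_{ij}-Z_{ij}\ge M(1)_{ij}>0$. Summing yields $g(1)\ge 0$, i.e.\ $R(1)\ge 0=(\alpha^2-1)\psi(\alpha)\big|_{\alpha=1}$. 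Tracking when each entrywise inequality is strict shows that $g(1)>0$ precisely when $P_{\Omega^C}(\min(0,W(1)H(1)))\neq P_{\Omega^C}(\min(0,WH))$, which is exactly the hypothesis of the ``moreover'' clause.

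Finally I would promote the $\alpha=1$ inequality to an interval $(1,\min(\alpha_*,\widehat{\alpha})]$ by continuity of $g$. This is where the rank hypothesis enters: on $[1,\widehat{\alpha}]$ the matrix $Z_\alpha H^T$ has constant rank, so the orthogonal projector $E(\alpha)=Q(\alpha)Q(\alpha)^T$ onto $\mathcal{R}(Z_\alpha H^T)$ varies continuously with $\alpha$ even though the basis $Q(\alpha)$ need not; hence $M(\alpha)$, $S(\alpha)$, $\psi(\alpha)$, $R(\alpha)$, and therefore $g$ are continuous on $[1,\widehat{\alpha}]$. When $g(1)>0$ (the non-degenerate case) continuity immediately yields $g(\alpha)>0$ on a right-neighborhood of $1$, giving both the claim and its strict form. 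I expect the main obstacle to be twofold: establishing the continuity of the projector under the constant-rank assumption (the reason this hypothesis is imposed), and handling the borderline case $R(1)=0$, where plain continuity does not suffice. In that degenerate case one must control the first-order behaviour of $R(\alpha)-(\alpha^2-1)\psi(\alpha)$ as $\alpha\to 1^+$, showing that the $O(\alpha-1)$ loss induced by the $1/\alpha^2$ factor is dominated by the surviving residual-reduction terms; the remaining estimates are then elementary.
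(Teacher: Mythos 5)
Your argument follows essentially the same route as the paper's proof: continuity of the residual-reduction term on $[1,\widehat{\alpha}]$ from the constant-rank hypothesis (the paper gets it from continuity of the Moore--Penrose inverse, writing the projector as $(Z_\alpha H^T)(Z_\alpha H^T)^\dagger$, which is the same fact you invoke for $E(\alpha)$), followed by an evaluation at $\alpha=1$ and an entrywise sign analysis on $\Omega^C$. Your split by the sign of $M(\alpha)$ is just a different packaging of the paper's algebraic expansion of the norms, and your characterization of strictness at $\alpha=1$ --- strict exactly when $P_{\Omega^C}(\min(0,W(1)H(1)))\neq P_{\Omega^C}(\min(0,WH))$ --- agrees with the paper's. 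In the non-degenerate case your proof is complete and correct, and that is the only case the paper actually uses downstream: Corollary~\ref{cor:res_red} assumes precisely this non-degeneracy.

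The piece you leave open, the borderline case $R(1)=0$, is a genuine gap, and you are right that plain continuity does not close it; note that the paper's own proof silently makes the same jump (``we just need to show that it is larger than or equal to zero at $\alpha=1$'' is not sufficient when the value there is exactly zero). But do not expect the deferred first-order estimate to be ``elementary'': it can fail. In your notation one needs $R(\alpha)\geq(\alpha^2-1)\psi(\alpha)$, and expanding to first order in $(\alpha-1)$ gives $R(\alpha)=2(\alpha-1)\sum w_{ij}M(1)_{ij}+O((\alpha-1)^2)$ (sum over entries of $\Omega^C$ with $w_{ij}:=(WH)_{ij}>0$ and $M(1)_{ij}>0$), against $(\alpha^2-1)\psi(\alpha)=2(\alpha-1)\sum M(1)_{ij}^2+O((\alpha-1)^2)$ (sum over entries of $\Omega^C$ with $M(1)_{ij}>0$). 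An entry with $(WH)_{ij}=0$ and $M(1)_{ij}>0$ is compatible with the degeneracy condition, contributes $(\alpha^{-2}-1)M(\alpha)_{ij}^2<0$ to $g(\alpha)$ and nothing to $R(\alpha)$, so the first-order balance can tip the wrong way and $g(\alpha)<0$ for all $\alpha>1$ near $1$. The defensible version of the lemma is therefore the strict inequality under the non-degeneracy hypothesis --- which is what you actually prove, and all that the convergence analysis needs.
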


A similar result is proved in \cite{wen2012solving} for  matrix completion. In their less restrictive setting, the residual reduction after updating the entries of $Z$ in $\Omega^C$ is independent of $\alpha$ and nonnegative. However, due to the additional constraint $P_{\Omega^C}(Z) \leq 0$ in the \revise{3B-RMD} formulation, the residual reduction in (\ref{eq:sigma_z_pc}) of the \revise{\ename} algorithm is only locally nonnegative in a range where $\alpha$ is larger than one.  

Analogously to Lemma~\ref{lem:sigm_pc_z}, we now state a similar result that characterizes the residual reduction when the entries of $Z$ in $\Omega$ are updated.
\begin{lemma}     \label{lem:sig_Z} 
  Let $(W(\alpha),H(\alpha))$ be generated by the scheme in (\ref{eq:eBCD-NMD-scheme_v2}). Furthermore, assume there exists $\widehat{\alpha}>1$ such that rank($Z_{\alpha}H^T$)=rank($ZH^T$), for  $\alpha \in [1,\widehat{\alpha}]$. Then 
   \begin{equation} \label{eq:sigma_z} 
       \lim_{\alpha \to 1^+}\frac{1}{\alpha^2}\lVert P_{\Omega}(W(\alpha)H(\alpha)-Z_{\alpha}) \rVert_F^2-\lVert P_{\Omega}(S(\alpha)) \rVert_F^2 = 0. 
   \end{equation}  
\end{lemma}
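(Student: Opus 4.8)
The plan is to show that the entire expression on the left-hand side of~\eqref{eq:sigma_z} is continuous in $\alpha$ on a right-neighborhood of $1$ and that its value at $\alpha=1$ is exactly $0$; the stated limit then follows immediately. Writing $E(\alpha)=Q(\alpha)Q(\alpha)^T$ for the orthogonal projector onto $\mathcal{R}(Z_\alpha H^T)=\mathcal{R}(W(\alpha))$, relation~\eqref{eq:orth_relation} gives $W(\alpha)H(\alpha)=E(\alpha)Z_\alpha$, so the two quantities to be compared are $\tfrac{1}{\alpha^2}\lVert P_\Omega(E(\alpha)Z_\alpha-Z_\alpha)\rVert_F^2$ and $\lVert P_\Omega(S(\alpha))\rVert_F^2$, with $S(\alpha)$ as in~\eqref{eq:residual}.

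First I would establish that $\alpha\mapsto E(\alpha)$ is continuous on $[1,\widehat\alpha]$. The matrix $Z_\alpha=\alpha Z+(1-\alpha)WH$ is affine, hence continuous, in $\alpha$, and so is $Z_\alpha H^T$. The constant-rank hypothesis $\rank(Z_\alpha H^T)=\rank(ZH^T)$ on $[1,\widehat\alpha]$ is precisely what is needed to guarantee that the orthogonal projector $E(\alpha)$ onto $\mathcal{R}(Z_\alpha H^T)$ depends continuously on $\alpha$: on a set where the rank is locally constant, the Moore--Penrose inverse (and hence the projector, expressed through it or through the SVD) varies continuously. Consequently $W(\alpha)H(\alpha)=E(\alpha)Z_\alpha$ is continuous, and since $\max(0,\cdot)$, $P_\Omega$, and $P_{\Omega^C}$ are continuous, the residual $S(\alpha)$ in~\eqref{eq:residual} is continuous as well. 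Passing to the limit $\alpha\to 1^+$ and using $Z_1=Z$ together with $\tfrac{1}{\alpha^2}\to 1$, it suffices to prove
$$\lVert P_\Omega(W(1)H(1)-Z)\rVert_F^2-\lVert P_\Omega(S(1))\rVert_F^2=0,$$
where $W(1)H(1)=E(1)Z$ is the ordinary \name\ update applied to $Z$.

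This final identity is a direct comparison on $\Omega$. Since the current iterate $Z$ is feasible, $\max(0,Z)=X$, we have $Z_{ij}=X_{ij}$ for $(i,j)\in\Omega$, i.e.\ $P_\Omega(Z)=P_\Omega(X)$. Likewise, by the definition $Z(1)=P_\Omega(X)+P_{\Omega^C}(\min(0,W(1)H(1)))$ we have $P_\Omega(Z(1))=P_\Omega(X)$, hence $P_\Omega(S(1))=P_\Omega(Z(1)-W(1)H(1))=P_\Omega(X)-P_\Omega(W(1)H(1))$. On the other hand, $P_\Omega(W(1)H(1)-Z)=P_\Omega(W(1)H(1))-P_\Omega(X)=-P_\Omega(S(1))$, so the two squared Frobenius norms coincide and their difference vanishes. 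I expect the only delicate point to be the continuity of $E(\alpha)$ at $\alpha=1$: this is exactly why the lemma assumes local rank preservation, which rules out the jump in $\mathcal{R}(Z_\alpha H^T)$ that would otherwise break continuity (the same hypothesis underlies Lemma~\ref{lem:sigm_pc_z}). Everything else is an algebraic consequence of feasibility of $Z$ on $\Omega$ and the pointwise evaluation at $\alpha=1$.
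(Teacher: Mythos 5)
Your proof is correct and follows essentially the same route as the paper's: continuity of the left-hand side on a right-neighborhood of $1$ via the constant-rank hypothesis and the continuity of the Moore--Penrose inverse (exactly as in Lemma~\ref{lem:sigm_pc_z}), followed by evaluation at $\alpha=1$ where the two terms coincide. The paper phrases the final step slightly differently, rewriting $P_{\Omega}(W(\alpha)H(\alpha)-Z_{\alpha})=-P_{\Omega}(S(\alpha))+(1-\alpha)P_{\Omega}(S)$ and then setting $\alpha=1$, but this is the same algebraic identity you obtain from $P_\Omega(Z)=P_\Omega(X)=P_\Omega(Z(1))$.
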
 
The following corollary gathers the findings from Lemma~\ref{lem:sig_WH}, Lemma~\ref{lem:sigm_pc_z}, and Lemma~\ref{lem:sig_Z}.
\begin{corollary}  \label{cor:res_red}  
    Let $(W(\alpha),H(\alpha),Z(\alpha))$ be generated by  the scheme in (\ref{eq:eBCD-NMD-scheme_v2}). Furthermore, assume there exists $\widehat{\alpha}>1$ such that rank($Z_{\alpha}H^T$)=rank($ZH^T$) for  $\alpha \in [1,\widehat{\alpha}]$.
     If $P_{\Omega^C}(\min(0,WH)) \neq P_{\Omega^C}(\min(0,W(1) H(1)))$, then there exists $\Tilde{\alpha} >1$ such that 
    \begin{equation}           \label{eq:res_diff}  
        \lVert S \rVert_F^2-\lVert S(\alpha) \rVert_F^2=\sigma_{WH}(\alpha)+\sigma_{Z}(\alpha)>0 \quad \forall \alpha \in [1,\min(\widehat{\alpha},\Tilde{\alpha})].
    \end{equation}
\end{corollary}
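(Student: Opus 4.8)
The plan is to assemble the three preceding lemmas into a single strict–decrease statement, the only delicate point being the sign-indefinite contribution coming from the entries of $Z$ on $\Omega$. Starting from the splitting in \eqref{eq:res_decr}, I would write the total reduction as
$\lVert S\rVert_F^2-\lVert S(\alpha)\rVert_F^2=\sigma_{WH}(\alpha)+\sigma_Z^{\Omega^C}(\alpha)+\sigma_Z^{\Omega}(\alpha)$,
where $\sigma_Z(\alpha)$ is further decomposed into its restriction to $\Omega^C$ (the bracket treated in Lemma~\ref{lem:sigm_pc_z}) and to $\Omega$ (the bracket treated in Lemma~\ref{lem:sig_Z}). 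Lemma~\ref{lem:sig_WH} immediately gives $\sigma_{WH}(\alpha)\geq 0$ for every $\alpha\geq 1$, and Lemma~\ref{lem:sigm_pc_z} gives $\sigma_Z^{\Omega^C}(\alpha)\geq 0$ on $(1,\min(\alpha_*,\widehat\alpha)]$, which is \emph{strict} precisely under the corollary's hypothesis $P_{\Omega^C}(\min(0,WH))\neq P_{\Omega^C}(\min(0,W(1)H(1)))$.

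The only term without a definite sign is $\sigma_Z^{\Omega}(\alpha)$, for which Lemma~\ref{lem:sig_Z} only guarantees $\sigma_Z^{\Omega}(\alpha)\to 0$ as $\alpha\to 1^+$. The strategy is therefore a \emph{domination} argument in a right–neighborhood of $1$, and the first thing to check is that the strictly positive contribution does not itself degenerate at the endpoint, i.e. that $\sigma_Z^{\Omega^C}(1)>0$ and not merely $\geq 0$. This follows from the fact that, on $\Omega^C$, the updated $Z(1)$ is the entrywise Euclidean projection of $W(1)H(1)$ onto $(-\infty,0]$, hence the \emph{unique} minimizer of the distance to $W(1)H(1)$ over the feasible set \eqref{eq:conv_set_Z}; since the current iterate $Z$ is feasible and, by hypothesis, differs from $Z(1)$ on $\Omega^C$, the projection strictly decreases the $\Omega^C$–residual, yielding $\sigma_Z^{\Omega^C}(1)>0$.

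The constant-rank assumption $\rank(Z_\alpha H^T)=\rank(ZH^T)$ on $[1,\widehat\alpha]$ is exactly what legitimizes the remaining continuity arguments: it forces $\mathcal{R}(Z_\alpha H^T)$, hence an orthonormal basis $Q(\alpha)$, the projector $E(\alpha)=Q(\alpha)Q(\alpha)^T$, and therefore $W(\alpha)H(\alpha)=E(\alpha)Z_\alpha$, the residual $S(\alpha)$, and all three $\sigma$–terms to depend continuously on $\alpha$. Combining continuity of $\sigma_Z^{\Omega^C}$ with $\sigma_Z^{\Omega^C}(1)>0$, I would pick $\tilde\alpha\in(1,\min(\alpha_*,\widehat\alpha)]$ small enough that $\sigma_Z^{\Omega^C}(\alpha)>\tfrac12\sigma_Z^{\Omega^C}(1)$ and, simultaneously, $|\sigma_Z^{\Omega}(\alpha)|<\tfrac12\sigma_Z^{\Omega^C}(1)$ for all $\alpha\in(1,\tilde\alpha]$ (the latter from Lemma~\ref{lem:sig_Z}). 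Dropping the nonnegative $\sigma_{WH}(\alpha)$, the total reduction is then bounded below by $\sigma_Z^{\Omega^C}(\alpha)+\sigma_Z^{\Omega}(\alpha)>0$. The endpoint $\alpha=1$ is handled separately: there the scheme coincides with one exact \name\ step, so $\sigma_{WH}(1)\geq 0$ while the $Z$–projection gives $\sigma_Z(1)>0$ by the same uniqueness argument, hence strict decrease. Setting the interval to $[1,\min(\widehat\alpha,\tilde\alpha)]$ closes the proof.

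The main obstacle I anticipate is precisely the control of $\sigma_Z^{\Omega}(\alpha)$: unlike in the plain matrix-completion analysis of Wen et al.~\cite{wen2012solving}, the extra constraint $P_{\Omega^C}(Z)\le 0$ removes any global sign of the per-block reductions, so the guaranteed decrease is only \emph{local} in $\alpha$. Making the domination rigorous hinges on the strict positivity of $\sigma_Z^{\Omega^C}$ surviving the limit $\alpha\to 1^+$ (rather than passing Lemma~\ref{lem:sigm_pc_z} to the limit and obtaining only $\geq 0$), which is why the separate verification $\sigma_Z^{\Omega^C}(1)>0$ through uniqueness of the convex projection is the crux of the argument.
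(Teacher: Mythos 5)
Your proposal is correct and follows essentially the route the paper intends: the corollary is stated as a direct combination of Lemmas~\ref{lem:sig_WH}, \ref{lem:sigm_pc_z} and \ref{lem:sig_Z}, with $\sigma_{WH}(\alpha)\geq 0$ for all $\alpha\geq 1$, the $\Omega^C$-contribution strictly positive at $\alpha=1$ under the stated hypothesis (your convex-projection uniqueness argument is an equivalent, slightly more conceptual justification of the ``Moreover'' clause, whose proof in the appendix establishes the same strict positivity algebraically), and the sign-indefinite $\Omega$-contribution dominated near $\alpha=1$ by continuity under the constant-rank assumption. The domination step and the separate treatment of the endpoint $\alpha=1$ correctly fill in the details the paper leaves implicit.
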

Corollary~\ref{cor:res_red} guarantees that there exist some values of the extrapolation parameter $\alpha$ strictly larger than 1 resulting in a positive residual reduction 
for \revise{\ename}~(\ref{eq:eBCD-NMD-scheme_v2}). 

\paragraph{Step 2} 

Before stating the actual convergence theorem, we need to introduce the optimality conditions or KKT conditions for problem (\ref{eq:lat_3B_matr_com}). Let us define the Lagrangian function as
\begin{equation}
    \mathcal{L}(W,H,Z,\Lambda,\Sigma)=\frac{1}{2} \lVert  Z-WH \rVert_F^2+\langle \Lambda, P_{\Omega}(Z-X) \rangle +\langle \Sigma, P_{\Omega^C}(Z) \rangle,
    \label{eq:lagrang_NMD}
\end{equation}
where the dual variables are such that $\Lambda=P_{\Omega}(\Lambda)$ and $\Sigma=P_{\Omega^C}(\Sigma)$:
\begin{equation}
    \begin{aligned}
         \text{Stationarity condition: } &\nabla_W \mathcal{L}(Z,W,H) = -(Z-WH)H^T = 0,\\
       &\nabla_H \mathcal{L}(Z,W,H) =-W^T(Z-WH) = 0,\\
        &\nabla_Z \mathcal{L}(Z,W,H,\Lambda,\Sigma) =Z-WH+\Lambda+\Sigma=0, \\  
        \text{Primal feasibility: } & P_{\Omega}(Z-X)=0, P_{\Omega^C}(Z) \leq 0, \\      
        \text{Dual feasibility: }  &\Sigma \geq 0,\\
        \text{Complementary slackness: }&  \langle \Sigma, P_{\Omega^C}(Z) \rangle=0, 
    \end{aligned}
    \label{eq:KKT_NMD}
\end{equation}
where the multiplier matrices $\Lambda$ and $\Omega$ measure the residual in $\Omega$ and $\Omega^C$ respectively, that is,  $\Lambda=P_{\Omega}(WH-Z)$ and $\Sigma=P_{\Omega^C}(\max(0,WH))$. Our goal is to prove that, under some mild assumptions, a subsequence of the sequence generated by \revise{\ename} satisfies the KKT condition at the limit. Specifically, one can verify that the \revise{\ename} sequence satisfies the complementary slackness condition and the stationarity condition for $Z$ at any point. 
Therefore, we just need to show that the optimality conditions for $W$ and $H$ are satisfied by a subsequence of the \revise{\ename} scheme at the limit, that is, 
\begin{equation}
    \lim_{k \to \infty} \nabla_W \mathcal{L}(Z^k,W^k,H^k)=0,  \qquad \lim_{k \to \infty} \nabla_H \mathcal{L}(Z^k,W^k,H^k)=0.
    \label{eq:opt_cond_kkt}
\end{equation}

\paragraph{Step 3} 
We are now ready to prove the convergence result for Algorithm  \ref{alg:eBCD_NMD}.

\begin{theorem}
\label{th:conv_ebcd}
    Let $(W^{k+1},H^{k+1},Z^{k+1})$ be generated by \revise{eBCD} (Algorithm~\ref{alg:eBCD_NMD}), and assume that $\{P_{\Omega^C}(W^k H^k)\}$ is bounded for all $k$.  
    Then there exists a bounded subsequence of \\  $(W^{k+1},H^{k+1},Z^{k+1})$ whose limit point satisfies the KKT conditions (\ref{eq:KKT_NMD}). 
\end{theorem}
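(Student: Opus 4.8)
The plan is to establish three facts in sequence and then assemble them: (i) the residual norm $\|S^k\|_F$ is monotonically non-increasing, hence convergent; (ii) the generated sequence admits a bounded, convergent subsequence; and (iii) the limit of a suitably chosen subsequence satisfies \eqref{eq:KKT_NMD}. Since complementary slackness, primal and dual feasibility, and the $Z$-stationarity condition already hold at every iterate of \ename\ (Step~2), the only thing left to prove at the limit is the pair of stationarity conditions \eqref{eq:opt_cond_kkt} for $W$ and $H$.

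First I would argue monotonicity. On an accepted step the acceptance test $\delta(\alpha_k)<1$ gives $\|S^{k+1}\|_F<\|S^k\|_F$ directly, while an unsuccessful step resets $\alpha_k=1$ and performs a plain \name\ update, for which $\sigma_{WH}(1)\ge 0$ by Lemma~\ref{lem:sig_WH} and $\sigma_Z(1)\ge 0$ because $Z(1)$ is the Euclidean projection of $W(1)H(1)$ onto the feasible set \eqref{eq:conv_set_Z} while $Z^k$ is feasible. Thus $\{\|S^k\|_F\}$ is non-increasing and bounded below, so it converges and $\sum_k\big(\|S^k\|_F^2-\|S^{k+1}\|_F^2\big)<\infty$; in particular the per-step decrease tends to $0$. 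For boundedness, note $P_{\Omega}(Z^k)=P_{\Omega}(X)$ and $P_{\Omega^C}(Z^k)=P_{\Omega^C}(\min(0,W^kH^k))$, so the hypothesis that $\{P_{\Omega^C}(W^kH^k)\}$ is bounded yields boundedness of $\{Z^k\}$; writing $P_{\Omega}(W^kH^k)=P_{\Omega}(X)-P_{\Omega}(S^k)$ then bounds $\{W^kH^k\}$. Because the factor $W^k=Q(\alpha_{k-1})$ in \eqref{eq:eBCD-NMD-scheme_v2} has orthonormal columns, $\|W^k\|_F\le\sqrt r$ and $H^k=(W^k)^T(W^kH^k)$, so $\{H^k\}$ is bounded too; Bolzano--Weierstrass gives $(W^{k_j},H^{k_j},Z^{k_j})\to(W^*,H^*,Z^*)$.

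The crux is the stationarity of the limit, which I would obtain by a case split. If $\|S^k\|_F\to 0$, then $S^*=0$ and both $\nabla_W\mathcal{L}=-S^*(H^*)^T$ and $\nabla_H\mathcal{L}=-(W^*)^T S^*$ vanish. Otherwise $\|S^k\|_F\to c>0$, so $\delta(\alpha_k)\to 1$; since $\bar\delta<1$ the test $\delta(\alpha_k)\ge\bar\delta$ is eventually triggered, $\alpha_k$ is repeatedly increased and, on reaching $\bar\alpha$, restarted to $1$, which certifies infinitely many indices with $\alpha_k=1$, i.e. infinitely many genuine \name\ steps. Along that subsequence the decrease $\sigma_{WH}(1)+\sigma_Z(1)\to 0$ with both summands nonnegative, so $\sigma_{WH}(1)\to 0$. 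By \eqref{eq:sigma_12} this forces $\|S^kP^k\|_F\to 0$ and $\|E^k(1)S^k(I-P^k)\|_F\to 0$ with $P^k,E^k(1)$ the projections in \eqref{eq:orth_proj}. Using $P^k(H^k)^T=(H^k)^T$ I get $\|S^k(H^k)^T\|_F\le\|S^kP^k\|_F\|H^k\|_F\to 0$, which is the $W$-stationarity; the $H$-stationarity follows by writing $(W^{k+1})^TS^{k+1}=(W^{k+1})^T(Z^{k+1}-Z^k)$ for the \name\ steps and showing $Z^{k+1}-Z^k\to 0$ on $\Omega^C$ from $\sigma_Z(1)\to 0$ together with Lemma~\ref{lem:sigm_pc_z}. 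Passing to the limit on the subsequence and using continuity of the gradients then yields \eqref{eq:opt_cond_kkt}, hence \eqref{eq:KKT_NMD}.

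I expect the main obstacle to be precisely this last step: the residual-reduction quantities $\sigma_{WH}$ and $\sigma_Z$ are expressed through a mixture of the pre-update iterate $(W^k,H^k,Z^k)$ and the post-update factors $W(\alpha_k),H(\alpha_k)$, whereas the KKT conditions must be read off the Lagrangian gradient at a single point; reconciling these—in particular controlling the $H$-update contribution $(W^{k+1})^T(Z^{k+1}-Z^k)$ on $\Omega^C$—is the delicate part. A secondary difficulty is the bookkeeping of the restart rule needed to guarantee that $\alpha_k=1$ occurs infinitely often when the residual does not vanish, and ensuring along the chosen subsequence that $H^*$ keeps rank $r$ (so that $P^k\to P^*$ and the pseudoinverse-based maps stay continuous), which is exactly where the range-preservation hypothesis of Lemmas~\ref{lem:sigm_pc_z}--\ref{lem:sig_Z} is used.
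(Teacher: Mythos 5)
Your proposal is correct and follows essentially the same route as the paper's proof: boundedness of the iterates from the hypothesis on $\{P_{\Omega^C}(W^kH^k)\}$ together with the orthonormality of $W^{k+1}$, then the residual-reduction identity of Lemma~\ref{lem:sig_WH} along a subsequence of plain ($\alpha_k=1$) steps to force $\|S^kP^k\|_F\to 0$ and $\|E^kS^k\|_F\to 0$, from which the two stationarity conditions in \eqref{eq:opt_cond_kkt} follow. The only differences are organizational—you split on whether $\|S^k\|_F\to 0$, while the paper splits on the index set $\mathcal{F}=\{k:\sigma_Z(\alpha_k)\ge 0\}$ and telescopes over it (so its Case~1 does not need $\alpha_k=1$ infinitely often)—and your acknowledged loose end, the full row rank of $H^{k+1}$ needed to define $P^k$, is closed in the paper by the one-line observation $\mathcal{R}((Q^{k+1})^TQ^{k+1})\subseteq\mathcal{R}((Q^{k+1})^TZ_{\alpha_k})=\mathcal{R}(H^{k+1})$.
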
 
\begin{proof} We first discuss the boundness of the two sequences  $\{W^{k+1}\}$ and $\{H^{k+1}\}$ produced by Algorithm~\ref{alg:eBCD_NMD}. Te boundness of $\{P_{\Omega^C}(W^k H^k)\}$ implies that $\{Z^k\}$ and $\{W^k H^k \}$ are bounded. This also means that $\{Z_{\alpha_k}\}$ is a bounded sequence by construction. Moreover, the sequence $\{W^{k+1}\}$ contains orthogonal matrices whose columns \revise{form an orthonormal} basis for $\mathcal{R}(Z_{\alpha_k} (H^k)^T)$, thus it is bounded. Finally, $H^{k+1}=(W^{k+1})^T Z_{\alpha_k}$ and since $\{W^{k+1}\}$ and $\{Z_{\alpha_k}\}$ are bounded, then also $\{H^{k+1}\}$ is bounded.

We observe that at every iteration $H^{k+1}$ has always the same rank as $W^{k+1}$. This means that $H^{k+1}$ is full rank for every $k$ and Lemma~\ref{lem:sig_WH} can be applied. In fact, it follows from Lemma~\ref{lem:equival_range_v2} that $\mathcal{R}(Q^{k+1})=\mathcal{R}(Z_{\alpha_k}(H^k)^T) \subseteq \mathcal{R}(Z_{\alpha_k})$; that implies
$$\mathcal{R}(I)=\mathcal{R}((Q^{k+1})^TQ^{k+1}) \subseteq \mathcal{R}((Q^{k+1})^T Z_{\alpha_k})=\mathcal{R}(H^{k+1}),$$
where $I$ is the identity matrix of dimension equal to the number of columns of $Q^{k+1}$ or equivalently, to the rank of $Z_{\alpha_k}(H^k)^T$.

   Next, we define the set of indices where the update of $Z^k$ results in a positive residual reduction. Let $\mathcal{F}= \left\{ k \ : \ \sigma_{Z}(\alpha_k) \geq 0 \right\} $ and $\mathcal{F}^C$ its complement. 
   Therefore, denoting $E^k=E(\alpha_k)$,  since $H^{k+1}$ is full rank, we can use Lemma~\ref{lem:sig_WH}, and summing over all $k \in \mathcal{F}$, we get
   \begin{equation}
       \lVert S^0 \rVert_F^2 \geq \sum_{k \in \mathcal{F}} \sigma_{WH}(\alpha_k)= \sum_{k \in \mathcal{F}} \lVert S^k P^k \rVert_F^2+\lVert E^k S^k(I-P^k)\rVert_F^2.
       \label{eq:fond_eq_conv}
   \end{equation}
   We analyze separately two different scenarios. 

Case 1: assume $| \mathcal{F}^C | < \infty$. It follows from (\ref{eq:fond_eq_conv}) that 
   \begin{equation}
       \lim_{k \to \infty, k \in \mathcal{F}} \lVert S^k P^k \rVert_F^2=0, \quad  \lim_{k \to \infty, k \in \mathcal{F}} \lVert E^k S^k(I-P^k)\rVert_F^2=0,
   \label{eq:lim_to_zero_sq}
   \end{equation}
which implies
   %$$
       $\lim_{k \to \infty, k \in \mathcal{F}} \lVert E^k S^k\rVert_F^2=0$. 
   %$$
Recall that $P_{\Omega^C}(Z^k)=P_{\Omega^C}(\min(0,W^k H^k))$ and $P_{\Omega}(X)=P_{\Omega}(Z^k)$. 
Using (\ref{eq:orth_proj}), we have 
%$$
    $S^k P^k =(Z^k-W^k H^k)P^k=(Z^k-W^k H^k)U^k (U^k)^T$.
%$$
   Since $U^k$ is an orthonormal basis for $\mathcal{R}((H^k)^T)$ and $\{H^k\}$ bounded, 
   $$
       \lim_{k \to \infty, k \in \mathcal{F}}(Z^k-W^k H^k)(H^k)^T=-\nabla_W \mathcal{L}(Z^{k},W^{k},H^{k})=0.
       \label{eq:lim_to_zero}
   $$ 
Moreover,
\begin{equation}
    E^k S^k=E^k S^{k+1}+E^k(S^k-S^{k+1})=W^{k+1} (W^{k+1})^T(Z^{k+1}-W^{k+1}H^{k+1})+E^k(S^k-S^{k+1}).
    \label{eq:QS}
\end{equation}
   Furthermore
   \begin{equation}
   \begin{aligned}
       \lVert S^{k}-S^{k+1} \rVert_F^2=&\lVert P_{\Omega}(W^{k+1} H^{k+1}-W^k H^k) \rVert_F^2+\lVert P_{\Omega^C}(\max(0,W^{k+1} H^{k+1})-\max(0,W^k H^k)) \rVert_F^2\\
       \leq &\lVert W^{k+1} H^{k+1}-W^k H^k \rVert_F^2+\lVert \max(0,W^{k+1} H^{k+1})-\max(0,W^k H^k) \rVert_F^2\\
       \leq&2\lVert W^{k+1} H^{k+1}-W^k H^k \rVert_F^2.
         \end{aligned}
         \label{eq:S_Sold}
   \end{equation} 
   Using equation (\ref{eq:sigma_12}) and the fact that $\alpha_k \leq \Bar{\alpha}$ by construction, we have 
   \begin{equation}
       \lVert W^{k+1} H^{k+1} - W^kH^k \rVert_F^2\leq\Bar{\alpha}^2(\lVert S^kP^k \rVert_F^2+\lVert E^k S^k(I-P^k) \rVert_F^2).
       \label{eq:WH_WHold}
   \end{equation}
   Combining (\ref{eq:WH_WHold}) and (\ref{eq:lim_to_zero_sq}) we get 
   %$$
       $\lim_{k \to \infty, k \in \mathcal{F}} \lVert W^{k+1} H^{k+1} - W^kH^k \rVert_F^2=0$,
   %$$
thus, from (\ref{eq:S_Sold}), 
   \begin{equation}
      \lim_{k \to \infty, k \in \mathcal{F}}  \lVert S^{k}-S^{k+1} \rVert_F^2=0.
      \label{eq:si_si_old}
   \end{equation}
   Therefore, from (\ref{eq:si_si_old}) and (\ref{eq:QS}), along with the boundness of $\{W^k\}$, we conclude that
   $$
      \lim_{k \to \infty, k \in \mathcal{F}} (W^{k+1})^T(Z^{k+1}-W^{k+1}H^{k+1})=\lim_{k \to \infty, k \in \mathcal{F}} -\nabla_H \mathcal{L}(Z^{k+1},W^{k+1},H^{k+1})=0.
   $$

   Case 2: $| \mathcal{F}^C |=\infty$.  Let us analyze two subcases. 
   
 \noindent  Case 2a: $| \{k \in \mathcal{F}^C \ | \ \delta(\alpha_k)>\Bar{\delta} \} | < \infty$. 
   For $k$ sufficiently large, 
   %$$
       $\lVert S^{k+1} \rVert_F^2 \leq \Bar{\delta}  \lVert S^{k} \rVert_F^2$.  
   %$$
   Therefore, $\lim_{k \to \infty, k \in \mathcal{F}^C} \lVert S^k \rVert_F^2=0,$ which means that the sequence converges to a global minimizer of the problem.

   \noindent   Case 2b: $| \{k \in \mathcal{F}^C \ | \ \delta(\alpha_k)>\Bar{\delta} \} | = \infty$. 
     \revise{\ename} sets the parameter $\alpha_k$ to 1  for an infinite number of iterations. Denote $\mathcal{J}_1$ the set of indices in which $\alpha_k=1$, a similar relation to (\ref{eq:fond_eq_conv}) holds: 
   $$
       \lVert S^0 \rVert_F^2 \geq \sum_{k \in \mathcal{J}_1} \sigma_{WH}(\alpha_k)= \sum_{k \in \mathcal{J}_1} \lVert S^k P^k \rVert_F^2+\lVert E^k S^k(I-P^k)\rVert_F^2.
       \label{eq:fond_eq_conv2}
   $$
   Therefore, one can replicate the proof for Case 1, and the subsequence whose indices are in $\mathcal{J}_1$ satisfies the optimality conditions at the limit.
\end{proof}

We emphasize that, similarly to \revise{\name}, we need a strong assumption, namely the boundness of $\{ P_{\Omega^C}(W^k H^k) \}$ which guarantees that the \revise{\ename} sequence remains bounded. This hypothesis serves the same purpose as the existence of the limit point in the \revise{\name} algorithm, since it allows us to exclude pathological cases where the infimum of the problem is not attained.

\section{Numerical experiments} 
\label{sec:num_res}

In this section, we present numerical experiments on synthetic and real-world data sets, for different applications of \revise{RMD}: matrix completion with ReLU sampling, the recovery of Euclidean distance matrices, low-dimensional embedding on text data, and the compression sparse dictionaries. 
Our main purpose is to show that \revise{\ename} behaves favorably compared to the state of the art, namely the \revise{Naive approach}~\cite{saul2022nonlinear}, \revise{EM}~\cite{saul2022nonlinear}, \revise{A-Naive}~\cite{seraghiti2023accelerated}, and \revise{e3B}~\cite{seraghiti2023accelerated}, while having convergence guarantees.  
In all experiments, we use random initialization: entries of $W$ and $H$ are sampled from a Gaussian distribution, \texttt{W = randn(m,r)} and \texttt{H = randn(r,n)} in MATLAB. 
We then scale them so that the initial point matches the magnitude of the original matrix, that is, we multiply $W$ by $ \sqrt{\lVert X \rVert_F} / \lVert W \rVert_F$ and $H$ by $ \sqrt{\lVert X \rVert_F}/ \lVert H \rVert_F$. 
If not otherwise specified, we stop each algorithm when we reach a small relative residual, namely $\Gamma_k=\lVert Z^k -W^kH^k\rVert_F/\lVert X\rVert_F \leq 10^{-9}$, or a fixed time limit is reached. 
The time limit will be selected depending on the size of the data. 
For the algorithms solving \revise{Latent-RMD}, the low-rank product $W^kH^k$ in $\Gamma_k$ is substituted with the matrix $\Theta^k$. The first stopping criteria can be achieved only if the target matrix admits an (almost) exact \revise{RMD}. The algorithms are implemented in MATLAB R2021b on a 64-bit Samsung/Galaxy with 11th Gen Intel(R) Core(TM) i5-1135G7 @ 2.40GHz and 8 GB of RAM, under Windows 11 version 23H2. The code is available online from 
\url{https://github.com/giovanniseraghiti/ReLU-NMD} .

The parameters of \revise{\ename} are the ones specified \revise{as default} in Algorithm~\ref{alg:eBCD_NMD}. For the parameters of the other methods, we use the setting from~\cite{seraghiti2023accelerated}. 

\subsection{Matrix completion with ReLU sampling}

 In this subsection, we consider the same experiment used in~\cite{liu2024symmetric} for symmetric matrix completion with ReLU sampling, adapted to the non-symmetric case. Specifically, we randomly generate the entries $W$ and $H$ from a Gaussian distribution, as for our initializations, and we construct $\Theta^t = W H$, and set $X=\max(0,\Theta^t)$ in the noiseless case and $\max(0,\Theta^t+N)$ in the noisy case, where $N=\sigma*\Tilde{N}*\lVert \Theta^t \rVert_F /\lVert \Tilde{N}\rVert_F $, $\Tilde{N} =$ \texttt{randn(m,n)}, and $\sigma \in (0,1)$. 
 In our experiment, we set $m=n=1000$, $r=20$  and $\sigma=10^{-2}$. 
 Note that since $W$, $H$ and $N$ are Gaussian, $X$ is approximately $50 \%$ sparse. 
 %We aim to recover the negative values of $\Theta^t$. We compare the algorithms based on the error with the ground truth solution and the approximation provided by the algorithms, that is, $\lVert \Theta^t -\Bar{W}\Bar{H}\rVert_F/\lVert \Theta^t\rVert_F$, where $\Bar{W}$ and $\Bar{H}$ are computed factors. 
 We present the loglog plot of the average residual $\Gamma_k$ against the average CPU time over 20 different randomly generated matrices $X$ in Figure \ref{Fig:box_plot_mc}. 
 The algorithms are stopped when the objective functions in (\ref{eq:lat_nmd}) and (\ref{eq:3B_nmd}) achieve a relative residual $\Gamma_k$ below $10^{-9}$ in the noiseless case,  or below $\sigma=10^{-2}$ in the noisy case. We also report the average CPU time \revise{and the standard deviation} to reach these values in Table~\ref{tab:ReLU_samp_cpu}. 
\begin{figure}[htb]
\begin{minipage}[b]{.45\linewidth}
  \centering
  \centerline{\includegraphics[width=\linewidth]{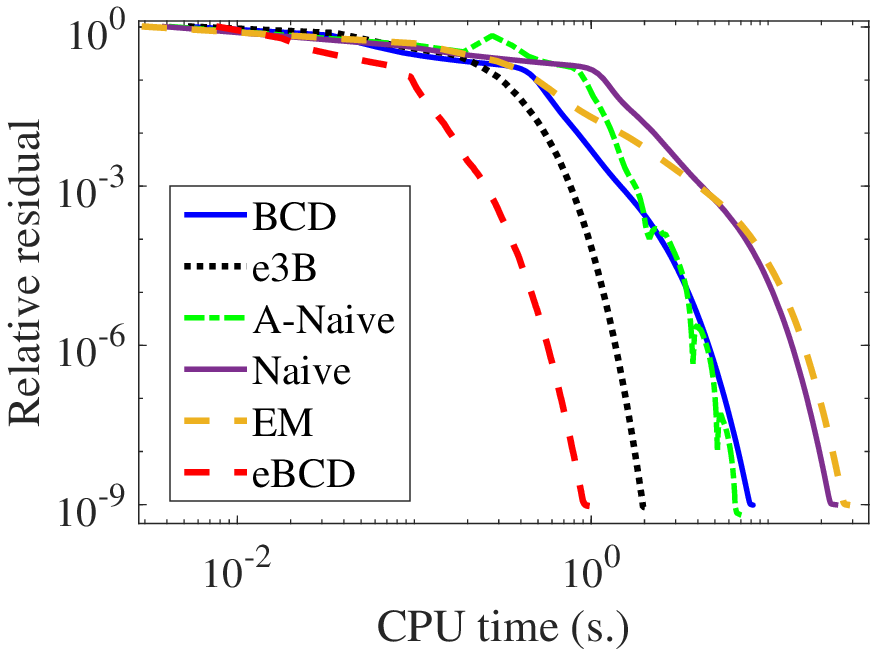}} 
\end{minipage}
\hfill
\begin{minipage}[b]{0.45\linewidth}
  \centering
  \centerline{\includegraphics[width=0.98\linewidth]{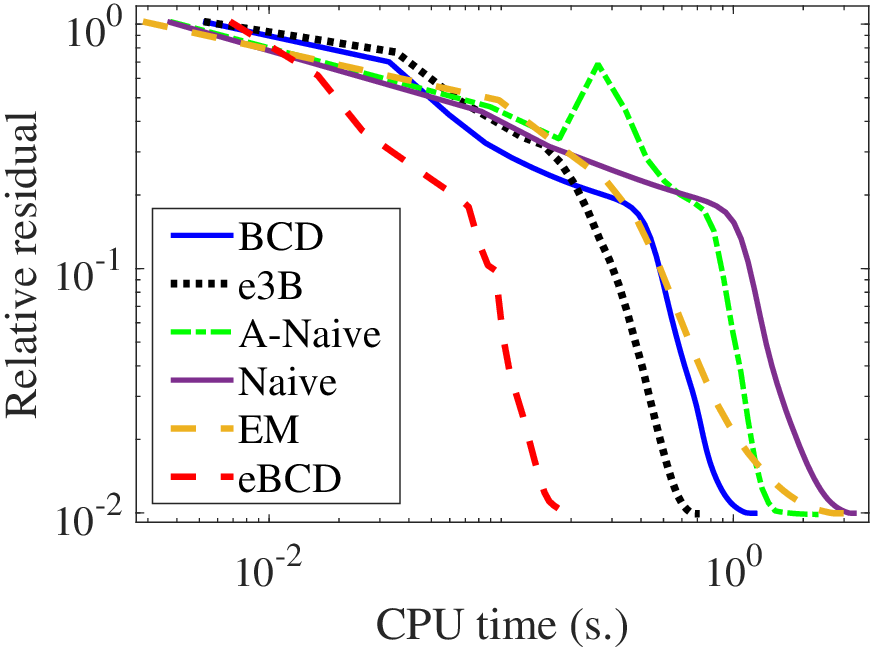}}
\end{minipage}
\caption{Matrix completion with ReLU sampling: evolution of the  average relative residual $\Gamma_k$ w.r.t.\ CPU time for the noiseless case (left image) and the noisy case (right image).} 
\label{Fig:box_plot_mc}
\end{figure}

\begin{table}[h!]
\centering
\begin{tabular}{ c | c | c | c | c | c | c | c } 
 %\toprule % \hline
  & &\hspace{-0.25cm}\name\hspace{-0.4cm}&\hspace{-0.25cm}\ename\hspace{-0.4cm}&\hspace{-0.25cm}e3B\hspace{-0.4cm}&\hspace{-0.25cm}Naive\hspace{-0.4cm}& \hspace{-0.25cm}A-Naive\hspace{-0.4cm}&\hspace{-0.25cm}EM\hspace{-0.4cm}\\ \hline 
  %\midrule% \hline
  \hspace{-0.55cm} No  \hspace{-0.3cm} \multirow{2}{*}{}& \hspace{-0.15cm} time \hspace{-0.15cm} & 7.92$\pm$0.16  & \textbf{0.90$\pm$0.04}     & 1.95$\pm$0.04  &   22.0$\pm$0.78 &   6.37$\pm$0.33 &   26.7$\pm$0.70 \\ %\cline{2-7}
             Noise    & \hspace{-0.15cm} iter \hspace{-0.15cm} & 304  & 121 & \textbf{65} & 308 & 84 & 296 \\ \hline   
   With  \multirow{2}{*}{}& \hspace{-0.15cm} time \hspace{-0.15cm} & 1.14$\pm$0.05& \textbf{0.19$\pm$0.01}   & 0.66$\pm$0.02 & 3.17$\pm$0.12 & 1.54$\pm$0.19 & 2.70$\pm$0.15\\ %\cline{2-7}
  Noise & \hspace{-0.15cm} iter \hspace{-0.15cm} & 36  & 22  & 20 & 41 & \textbf{19} & 28 \\ \hline   
 %\bottomrule 
\end{tabular}
\caption{Average CPU time in seconds, standard deviation, \revise{and average iteration number} to reach a tolerance on the relative residual $\Gamma_k$ of $10^{-9}$ in the noiseless case, and of $10^{-2}$ in the noisy case. Best values are highlighted in bold.}  
\label{tab:ReLU_samp_cpu}
\end{table}

Figure~\ref{Fig:box_plot_mc} shows that all the algorithms reach the tolerance on the relative residual. Moreover, \revise{\ename} outperforms all the other methods, being considerably faster both in the noiseless and noisy cases, as Table~\ref{tab:ReLU_samp_cpu} confirms. In fact, compared to the second best, \revise{\ename} is on average more than twice faster in the noiseless case, and more than three time faster in the noisy case (each time, the second best is \revise{e3B}). \revise{Table~\ref{tab:ReLU_samp_cpu} shows also that A-Naive and e3B need less iterations to converge than \ename. However, their computational cost per iteration is significantly larger.}

\subsection{EDMC}

We generate 200 points in two different ways in a three-dimensional space: (1)~uniformly at random in $[0,10]^3$, and (2)~randomly clustered around six clusters (4 with 30 points, and two with 40)  where the centroid $(\Bar{x},\Bar{y},\Bar{z})$ of the $i$th cluster in sampled uniformly at random in $[-10,10]^3$, and we  generate the points in MATLAB using $(\Bar{x},\Bar{y},\Bar{z})+\text{std}*\texttt{randn(1,3)}$, where std denotes the standard deviation that we fixed to std=3. 
We compute the Euclidean distance matrix $\Theta^t$ containing the squared distance between all couples of points and we construct our target matrix $X=\max(0,dee^T-\Theta^t)$, where $d$ represents the largest value in the matrix that we want to observe. The upper bound on the rank is fixed to $r=5$ which is equal to the rank of $\Theta^t$. In this experiment, we assume the threshold $d$ to be known, but one might be interested in considering $d$ as an unknown of the model. We aim to recover all the entries of $\Theta^t$ larger than $d$. In practice, the threshold $d$ is adapted to achieve the percentage of observed entries in the $x$-axis of the pictures on the right of Figure~\ref{Fig:EDMC_mc}. Specifically, we display the average relative error of the low-rank approximation with the Euclidean distance matrix $\Theta^t$ for different percentages of observed entries, over 10 random experiments. Moreover, all algorithms have been adapted to include the rank-1 modification in (\ref{eq:lat_3B_matr_com_rank1}). We excluded the \revise{EM} algorithm from the analysis because adapting it to the rank-1 modified model appears less straightforward than for the other algorithms. We also consider as a baseline the rank-$(r+1)$ reconstruction given by the original \revise{\name} method solving the \revise{3B-RMD} of rank $r+1$, without considering any rank-1 modification (($r+1$)-BCD). In this case, we compute a rank-($r+1$) decomposition of $X$ and evaluate the relative error with $dee^T-\Theta^t$. We stop the algorithms when a tolerance of $10^{-9}$ on the relative residual $\Gamma_k$ is reached or after 60 seconds. 

\begin{figure}[htb]
\begin{minipage}[b]{.45\linewidth}
  \centering
  \centerline{\includegraphics[width=\linewidth]{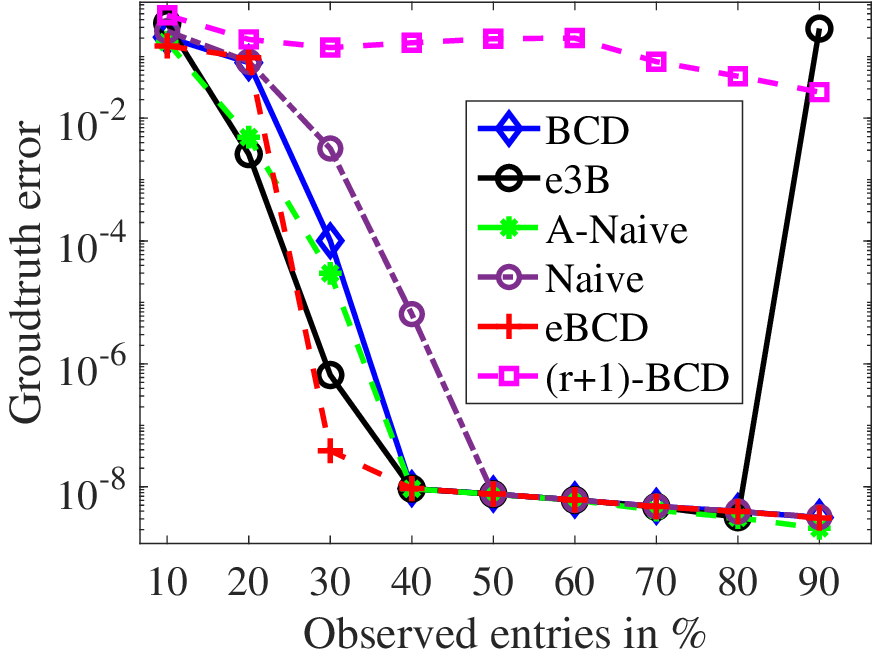}}
\end{minipage}
\hfill
\begin{minipage}[b]{0.45\linewidth}
  \centering
  \centerline{\includegraphics[width=\linewidth]{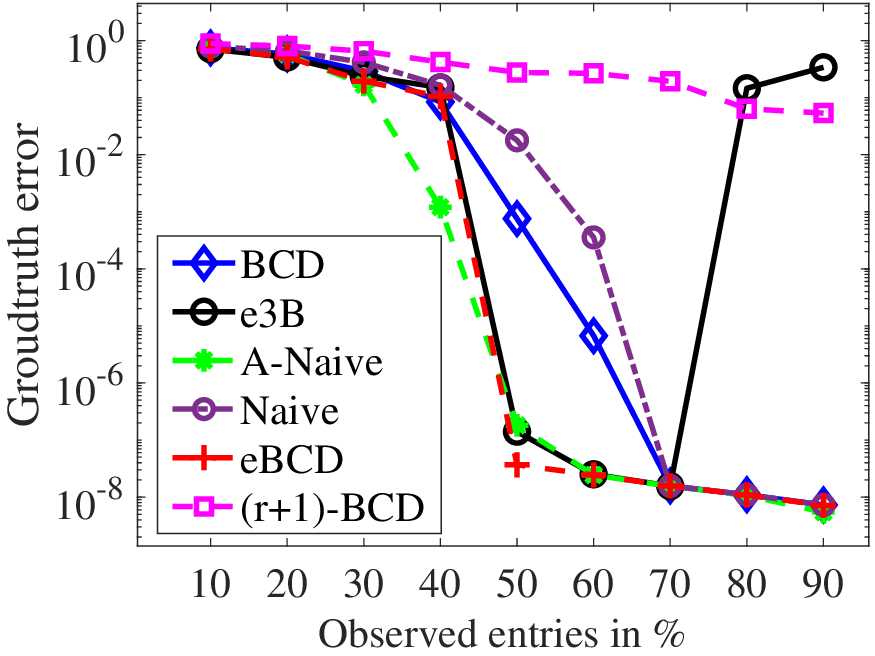}}
\end{minipage} 
\caption{Euclidean distance matrix completion: relative errors where the $x$-axis represents the percentage of known entries. The left figure is for the uniform distribution of the data points, the right figure for the clustered distribution.} 
\label{Fig:EDMC_mc}
\end{figure} 

% \begin{figure}[htb]
% \begin{minipage}[b]{.45\linewidth}
%   \centering
%   \centerline{\includegraphics[width=\linewidth]{}}
% \end{minipage}
% \hfill
% \begin{minipage}[b]{0.45\linewidth}
%   \centering
%   \centerline{\includegraphics[width=\linewidth]{rand_3d_grouundtruth_avg_new.eps}}
% \end{minipage}
% \vspace{0.5cm}

% \begin{minipage}[b]{.45\linewidth}
%   \centering
%   \centerline{\includegraphics[width=\linewidth]{}}
% \end{minipage}
% \hfill
% \begin{minipage}[b]{0.45\linewidth}
%   \centering
%   \centerline{\includegraphics[width=\linewidth]{cluster_groundtruth_avg_new.eps}}
% \end{minipage}
% \caption{Euclidean distance matrix completion: on the left, examples of distributions of the data points, and, on the right, relative errors where the $x$-axis represents the percentage of known entries.}
% \label{Fig:EDMC_mc}
% \end{figure} 
The distribution of the points affects the recovery performance of the algorithms, see Figure~\ref{Fig:EDMC_mc}: 
%\begin{itemize}
    %\item 
    When the points are randomly distributed, $30\%$ of the entries need to be observed for the \revise{\ename} to achieve an error below $10^{-7}$. 
    %\item 
    When the points are clustered, only $50\%$ of the entries need to be observed for the \revise{\ename} to achieve an error below $10^{-7}$.  
%\end{itemize}
Including the rank-1 modification in the model allows for a significantly more accurate reconstruction than solving a rank-($r+1$) \revise{3B-RMD}. In general, \revise{A-Naive}, \revise{\ename}, and \revise{e3B} provide the best reconstructions. However, \revise{e3B} fails at least once in both examples when the percentage of observed entries is above $80\%$. This behavior might be due to the aggressive extrapolation procedure that \revise{e3B} uses. We emphasize that neither the \revise{3B-RMD} nor the \revise{Latent-RMD} formulations impose any constraints to ensure that the solution is symmetric. However, if the number of observed entries is sufficiently large, the algorithms still recover the original symmetric solution.

\subsection{Low-dimensional embedding}

\revise{RMD} can be employed to find a lower-dimensional representation of the similarity matrix of the input points; see Section~\ref{sec:maniflearn}. 
% threshold similarity matching (TSM) framework of Saul~\cite{saul2022geometrical}
Our focus is on text data, where each row of the data matrix corresponds to a document, each column to a word, and the entries $(i,j)$ of the matrix is the number of occurrences of the $j$th word in the $i$th document. We consider two well-known and already pre-pocessed text-word data sets from~\cite{zhong2005generative}:  \textit{\textit{\textit{k1b}}} with 2340 documents and 21839 words, 
and \textit{\textit{hitech}} with 2301 documents and 10080 words. 
The threshold parameter $\tau$ in the TSM framework is selected according to the statistical analysis proposed in~\cite{saul2022geometrical}, that is, $\tau=0.17$ for \textit{k1b} and $\tau=0.08$ for \textit{hitech}. 
%Our objective is to evaluate how eBCD-NMD performs in comparison to the state-of-the-art algorithms for computing ReLU decomposition in this context. 
The quality of the embedding is evaluated using the mean angular deviation (MAD) 
metric as suggested in~\cite{saul2022geometrical}:  
%The MAD measures the difference between the angles that the algorithm tries to preserve (namely the ones below $\cos^{-1}(\tau)$).  Define 
%, 
%the MAD is given by 
$$ \label{eq:mean_ang} 
    \Delta=\frac{1}{|\mathcal{P}(\tau)|} \sum_{(i,j) \in \mathcal{P}(\tau)} \left| \cos^{-1} \left( \frac{\langle z_i, z_j\rangle}{\lVert z_i \rVert \lVert z_j \rVert} \right) - \cos^{-1} \left( \frac{\langle y_i, y_j\rangle}{\lVert y_i \rVert \lVert y_j \rVert} \right)\right|,  
    $$
   where the $z_i$'s are the higher-dimensional points, and the $y_i$'s the lower-dimensional embeddings, and $\mathcal{P}(\tau)=\{(i,j)\ | \ \langle z_i,z_j \rangle -\tau \lVert z_j\rVert  \lVert z_i\rVert > 0\}$, 
Figure~\ref{Fig:rank_an_text} displays the MAD and average run times per iteration for different ranks. 
We run all the algorithms for 60 seconds. 
\begin{figure}[ht!]
\begin{center} 
\begin{tabular}{ccc}
  \includegraphics[width=0.4\linewidth]{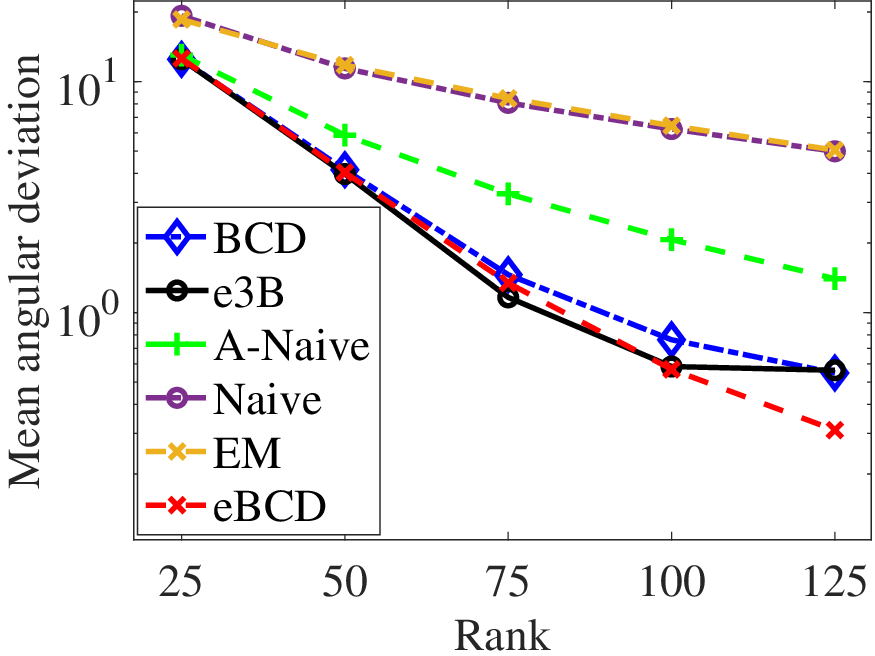}   & \quad & \includegraphics[width=0.4\linewidth]{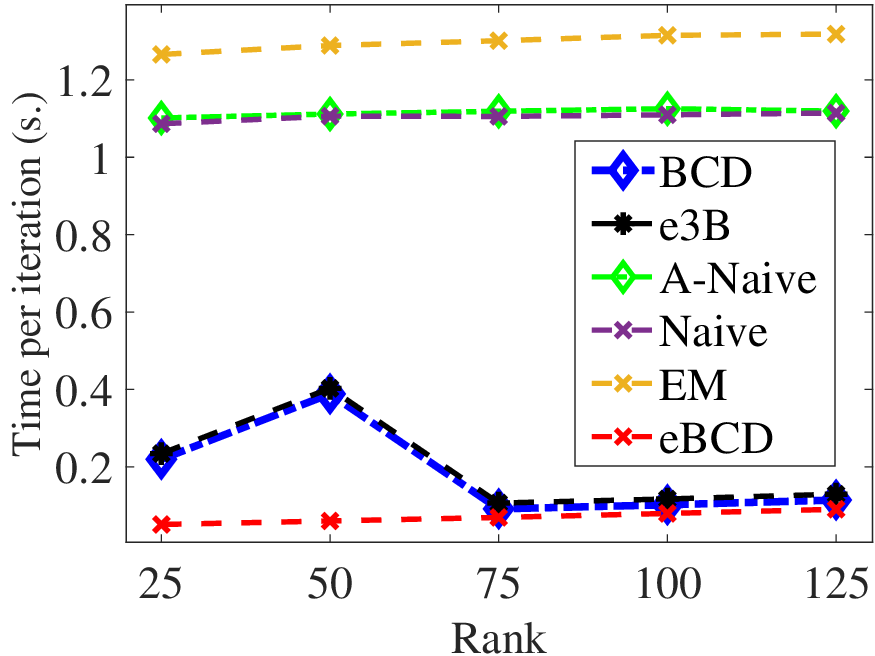} \\
   \includegraphics[width=0.4\linewidth]{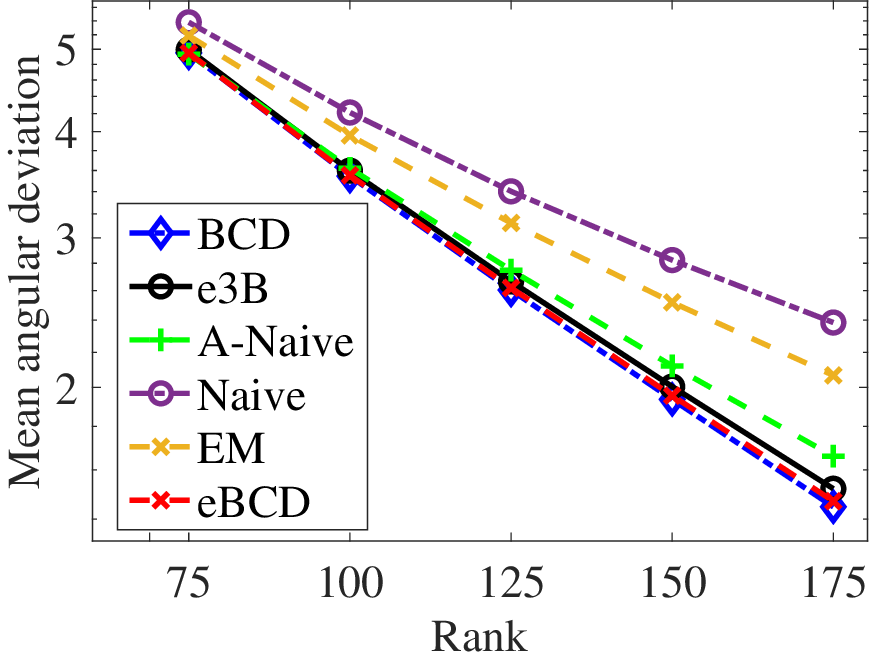}  & \quad & 
   \includegraphics[width=0.4\linewidth]{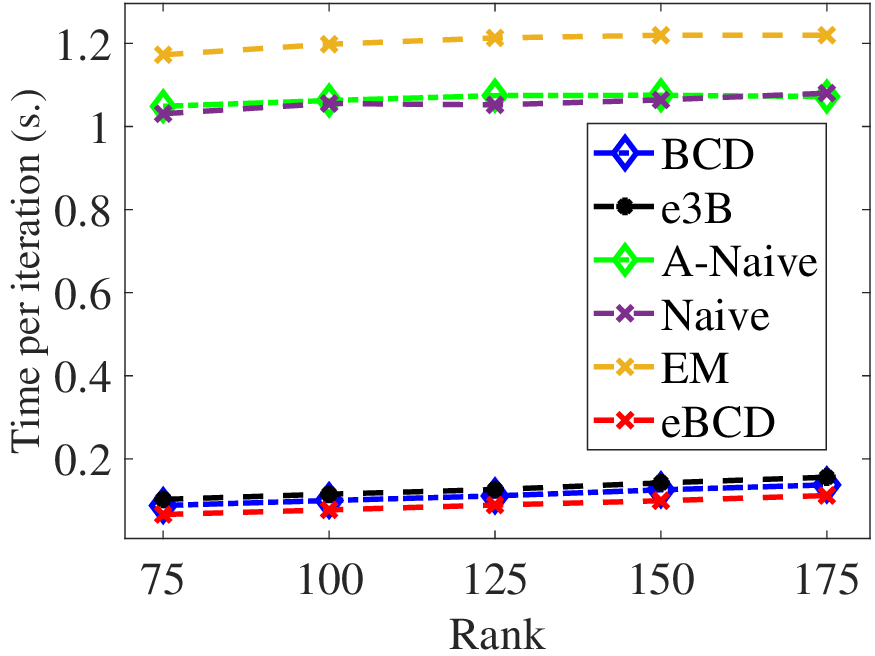}
\end{tabular}
\end{center}
\caption{MAD analysis and average iteration time for increasing values of the rank: The top two figures are for the \textit{\textit{k1b}} dataset, and the bottom two for the \textit{hitech} dataset.  \label{Fig:rank_an_text}} 
\end{figure}  
% \begin{minipage}[h]{.45\linewidth}
%   \centering
%   \centerline{\includegraphics[width=\linewidth]{embed_k1b_mad_rank.eps}}
% \end{minipage}
% \hfill
% \begin{minipage}[h]{0.45\linewidth}
%   \centering
%   \centerline{\includegraphics[width=\linewidth]{embed_k1b_time_rank.eps}}
% \end{minipage} 
% \vspace{0.5cm}
% \begin{minipage}[h]{.45\linewidth}
%   \centering
%   \centerline{\includegraphics[width=\linewidth]{embed_hitec_mad_rank_v2.eps}}
% \end{minipage}
% \hfill
% \begin{minipage}[h]{0.45\linewidth}
%   \centering
%   \centerline{\includegraphics[width=\linewidth]{embed_hitec_time_rank_v2.eps}} 
Even though no algorithm outperforms the others, Figure~\ref{Fig:rank_an_text} shows that using the \revise{3B-RMD} is to be preferred in this application. Indeed, \revise{\name}, \revise{\ename} and \revise{e3B} achieve the lowest MAD in both experiments. Moreover, these three algorithms have the lowest average iteration time, being more than  one second faster per iteration than \revise{Naive}, \revise{A-Naive}, and \revise{EM}. For this reason, \revise{\name}, \revise{\ename}, and \revise{e3B} perform more iterations than \revise{Naive}, \revise{A-Naive}, and \revise{EM} within 60 seconds, resulting in more accurate approximations.

\subsection{Compression of sparse data}

The last application is the compression of sparse data. We consider sparse greyscale images: 10000 randomly selected images from MNIST and Fashion MNIST (fMNIST), both of size $784 \times 10000$, and on two well-known sparse images: Phantom ($256 \times 256$)  and Satellite ($256 \times 256$). 
We also use sparse, structured matrices from various applications from~\cite{dai2012geometric}: Trec11 ($235 \times 1138$), mycielskian ($767 \times 767$), lock ($1074 \times 1074$), and beaconfd ($173 \times 295$). 
We choose the approximation rank $r$ to ensure a desired level of compression. Specifically, when computing the \revise{RMD} factors $W$ and $H$ of a sparse matrix $X \in \mathbb{R}^{m \times n}$ with $\nnz(X)$ non zero entries, the number of entries of $WH$ is $r(n+m)$. Hence, the rank $r$ is set to the closest integer such that $r\leq 0.5 \cdot \nnz(X) /  (n+m) $, which means a compression of $50\%$.  
 Table~\ref{tab:res_comp} shows the relative error 
$ \frac{\| X - \max(0,\Bar{W}\Bar{H}) \|_F}{\| X \|_F}$,
 where $\Bar{W}$ and $\Bar{H}$ are the computed factors. 
 \revise{
 Since we are considering real-world data of different dimensions, using a single time limit that is suitable for all the dataset is not meaningful. 
 Therefore, we assign a time limit to all the algorithms (except the TSVD) scaled by the dimensions of each data set: we set a maximum time limit of $T=300$ seconds for the largest data sets, namely MNIST and fMNIST with $m_{\max}=784$ and $n_{\max}=10000$. 
 Then, we assign a scaled time limit $T_{\text{d}}$ for a data set of dimension $m_{\text{d}} \times n_{\text{d}}$ using
 $T_{\text{d}} = T \frac{m_{\text{d}} \times n_{\text{d}}}{m_{\max} \times n_{\max}}$ seconds}, 
 and display the average relative error over 10 different random initializations.  
 
\begin{table}[h!]
\centering
\begin{tabular}{ c || c | c | c | c | c | c | c || c } 
 \toprule % \hline
 \textbf{Data set}  & & \hspace{-0.1cm}\name\hspace{-0.3cm}&\hspace{-0.1cm}\ename\hspace{-0.3cm}&\hspace{-0.1cm}e3B\hspace{-0.3cm}&\hspace{-0.1cm}Naive\hspace{-0.3cm}& \hspace{-0.1cm}A-Naive\hspace{-0.3cm}&\hspace{-0.1cm}EM\hspace{-0.3cm}&\hspace{-0.15cm}TSVD\hspace{-0.2cm}  \\
  \midrule% \hline
    \hspace{-0.10cm} MNIST (300 s.) \multirow{2}{*}{}& err & 12.2\%   & 11.6\%   & \textbf{11.5\%} &   12.3\% &   11.6\%  &   12.9\% & 25.8\%  \\
 $r=70$ &  iter & 1215 &  2159  & 1075 & 1088  & 899  & 413 & /  \\
   \hline
 \hspace{-0.10cm} fMNIST (300 s.)  \multirow{2}{*}{}  & err  & 9.6\%& \textbf{9.1\%}& 9.2\%& 9.5\% & \textbf{9.1\%}& 9.7\%& 14.0\%\\
  $r=181$ &  iter & 703 &  1498  & 637 & 914  & 777  & 513 & /   \\
   \hline
\hspace{-0.10cm} Phantom (2.5 s.) \multirow{2}{*}{} & err & 9.0\%& \textbf{6.4\%} & 7.1 & 9.6\% & 7.8\%&9.8\% & 19.2\%\\
 $r=26$ &  iter & 540 &  2898  & 508 &  374 & 340  &  278 &  / \\
   \hline
 \hspace{-0.10cm} Satellite (2.5 s.) \multirow{2}{*}{} & err  & 16.1\% & \textbf{14.9\%} & 15.0\% & 17.0\% & 15.6\% & 18.2\% &26.0\% \\
 $r=12$  &  iter & 1052 & 4385   & 983 & 374  &  352 & 252 &  / \\
   \hline
  \hspace{-0.10cm} Trec11 (10.2 s.) \multirow{2}{*}{} & err & 31.3\% & 28.9\% & \textbf{28.8\%} &30.6\%& \textbf{28.8\%} & 35.7\%& 57.3\%\\
$r=13$   &  iter & 526 & 902   & 398  & 691  &461  &  228 & /  \\
   \hline
 \hspace{-0.10cm}  mycielskian (22.5 s.)\multirow{2}{*}{} & err & 3.6\% &  \textbf{0.6\%} & 0.9\% &8.0\%& 1.3\% & 9.0\%& 58.0\%\\
  $r=14$ &  iter & 516 &  1021  & 400 & 232  & 213  & 106 &  / \\
   \hline
  \hspace{-0.10cm} lock1074 (44.2 s.) \multirow{2}{*}{} &  err & 6.3\% & \textbf{0.1\%} & 0.2\% & 15.6\% & 1.9\% & 29.2\% & 67.3\% \\
 $r=12$  &  iter & 576 &  1158  & 432  & 238  & 214  & 96 &  / \\
   \hline
 \hspace{-0.10cm} beaconfd (2.0 s.) \multirow{2}{*}{} & err & 22.8 \% & 22.0\% & \textbf{21.7}\% &23.3\%& \textbf{21.7}\%& 37.3\%& 39.12\%\\
 $r=3$  &  iter & 998  &  1514  & 680 & 525  &  419 & 206 & /  \\
   \hline
 \bottomrule 
\end{tabular}
\caption{Compression of sparse matrices: \revise{we compare the average final relative error and the average iteration number, within the given time limit.} The best values are highlighted in bold.} 
\label{tab:res_comp}
\end{table}

We observe that \revise{RMD} outperform the TSVD, as all the algorithms yield approximations that are significantly more accurate than the TSVD. Additionally, \revise{\ename \   produces the most accurate solutions in five out of eight examples}; moreover, \revise{\ename} outperforms \revise{\name}, achieving a smaller compression error for all the data sets considered. We also observe that the structure of the matrix has a \revise{crucial} impact on the effectiveness of the compression using \revise{RMD}. 
\revise{For example, for lock1074, the improvement of RMD is impressive: the relative error of eBCD is 0.1\% while that of the TSVD is 67.3\%. On the other hand, the improvement for fMNIST is less pronounced: from 14\% relative error for the  TSVD to 9.6\% for RMD.}

Figure~\ref{Fig:vis_rep} visually illustrates the difference between TSVD and \revise{RMD} for  $50\%$ compression. 
One can visually appreciate that the \revise{\ename} approximation is consistently more accurate than the TSVD, as hinted by the errors  reported in Table~\ref{tab:res_comp}. 
\begin{figure}[h]
\begin{center}
    \begin{tabular}{ccc}
   \includegraphics[width=0.25\linewidth]{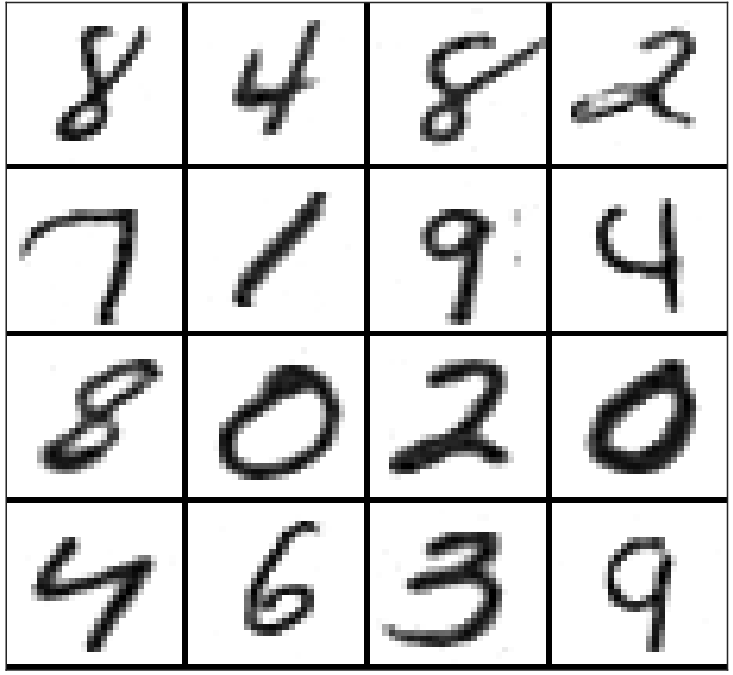}      &    \includegraphics[width=0.25\linewidth]{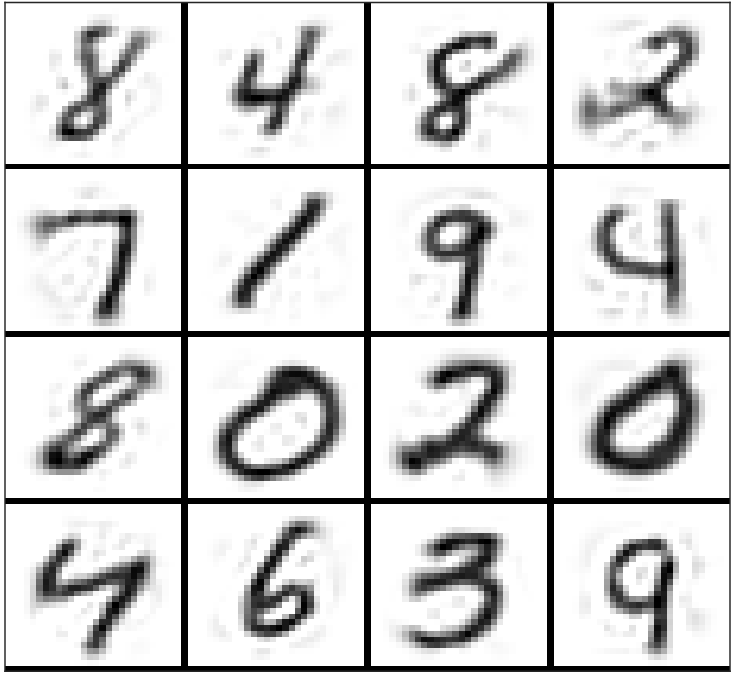}       
   &  \includegraphics[width=0.25\linewidth]{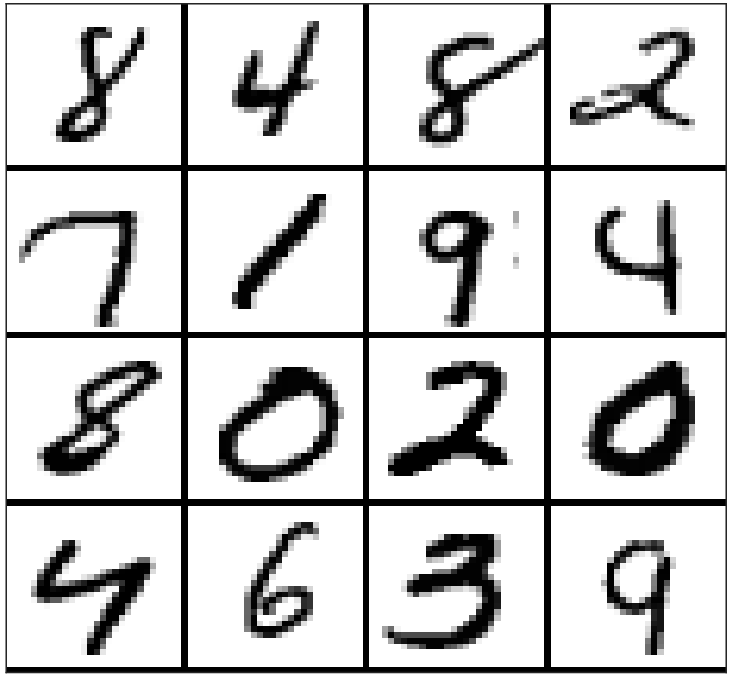} \\ 
       Original images & TSVD & \ename 
    \end{tabular} 
\end{center} 
\caption{Visual illustration: TSVD \revise{(relative error = 25.8\%)} against \revise{RMD with eBCD} \revise{(relative error = 11.6\%)} on the MNIST set with $50 \%$ compression ($r=70$).\label{Fig:vis_rep}} 
\end{figure}

\section{Conclusion}

The \revise{RMD} is a relatively recent matrix decomposition model (Saul, 2022~\cite{saul2022geometrical}) that finds applications in the compression of sparse data, entry-dependent matrix completion, and manifold learning. Moreover, we showed, for the first time, how it can be used for the completion of Euclidean distance matrices when only the smallest (or largest) entries are observed. 
Our first main contribution is to establish the non-equivalence of the two main models used to \revise{to compute RMDs}, namely \revise{LS-RMD} and \revise{Latent-RMD} (Corollary~\ref{cor:not_equivalence}). Furthermore, we derive an explicit connection between the optimal function values of the two, demonstrating that the solution of \revise{Latent-RMD} is, in the worst case, twice larger than that of \revise{LS-RMD} (Theorem~\ref{th:bound_for}). 
Our second main contribution is to prove the convergence of the \revise{\name} scheme for solving \revise{3B-RMD} by demonstrating that it falls into the general framework of exact BCD, where one of the subproblems is strictly quasi-convex (Theorem~\ref{th:conv_bcd}). 
Our third and most important contribution is a novel algorithm, \revise{\ename}, which uses extrapolation to accelerate \revise{\name}. \revise{Our new algorithm} extends the well-known LMaFit algorithm from~\cite{wen2012solving} for matrix completion. We proved the subsequence convergence of the eBCD-NMD scheme to a stationary point under mild assumptions (Theorem~\ref{th:conv_ebcd}). 
Finally, we provided extensive numerical results for 4 applications on synthetic and real data sets, showing that \revise{\ename} performs favorably compared to the to the state of the art, while having convergence guarantees which other algorithms lack. 

Future work includes investigating the well-posedness of the \revise{LS-RMD} in (\ref{eq:ReLU_NMD}), 
designing randomized algorithms to tackle large-scale problems, and  
exploring more advanced \revise{RMDs} such as 
$X\approx W_1H_1 + \max(0,W_2 H_2) - \max(0,W_3 H_3)$ that could  handle mixed-signed and non-sparse data. 

% \section*{Acknowledgments}
% {\footnotesize 
% G.S. and N.G. acknowledge  the support by the European Union (ERC consolidator, eLinoR, no 101085607). 
% The work of M.P.\ and G.S.\ was partially supported by INdAM-GNCS under the
% project CUP E53C23001670001.  The research of M.P.\  was  partially granted by the Italian Ministry of University and Research (MUR) through the PRIN 2022 ``MOLE: Manifold constrained Optimization and LEarning'',  code: 2022ZK5ME7 MUR D.D. financing decree n. 20428 of November 6th, 2024 (CUP B53C24006410006).
% }

\revise{
\section*{Acknowledgments}  We are grateful to the anonymous reviewers who
carefully read the manuscript, their feedback helped us improve our paper.  
}

\section{Appendix} \, 

\subsection{Proof of Lemma~\ref{lem:sig_WH}}  
\begin{proof}
 Let $(W(\alpha),H(\alpha))$ be generated by (\ref{eq:eBCD-NMD-scheme_v2}), and let $S=\mathcal{P}_{\Omega}(X-WH)-\mathcal{P}_{\Omega^C}(\max(0,WH))$ be the residual with respect to the $k$-th iterate $(W,H)$. Recall that $Z_\alpha=WH+\alpha S$, $P=H^\top (HH^\top)^{-1} H$, and $E(\alpha)=W(\alpha)W(\alpha)^T$. We denote $Q(\alpha)R(\alpha) \Pi=Z_\alpha H^T$, the QR factorization of $Z_\alpha H^T$ with column pivoting, where $\Pi$ is a permutation matrix. Note that $\Pi$ is needed only if $Z_\alpha H^T$ is rank-deficient; otherwise, one can consider $\Pi$ as the identity matrix. We have 
 \begin{equation}
     W(\alpha) H(\alpha)-WH=\left[ W(\alpha)R(\alpha)\Pi (HH^T)^{-1}-W\right]H + W(\alpha)\left[H(\alpha)- R(\alpha)\Pi (HH^T)^{-1}H\right].
     \label{eq:diff_wh}
 \end{equation}
The first term in (\ref{eq:diff_wh}) can be written as 
 \begin{equation}
 \begin{aligned}
     \left[ W(\alpha)R(\alpha)\Pi (HH^T)^{-1}-W\right]H = &  W(\alpha)R(\alpha)\Pi (HH^T)^{-1}H-WH\\
     = & Z_\alpha H^T (HH^T)^{-1}H - W (HH^T) (HH^T)^{-1}H=Z_\alpha P - W H P\\
     = & (Z_\alpha-WH)P=\alpha S P.
     \end{aligned}
     \label{eq:first_term_wh_diff}
 \end{equation}
 Similarly, we write the second term of the summation in terms of the residual.  Using (\ref{eq:first_term_wh_diff}), 
 $$
 \begin{aligned}
    H(\alpha)=&W(\alpha)^T Z_\alpha=W(\alpha)^T(WH+\alpha S) \\
    =&W(\alpha)^T\left[W(\alpha)R(\alpha)\Pi (HH^T)^{-1}H-(W(\alpha)R(\alpha)\Pi (HH^T)^{-1}-W)H+\alpha S \right] \\
    =&W(\alpha)^T\left[W(\alpha)R(\alpha)\Pi (HH^T)^{-1}H+\alpha S (I-P) \right].
    \end{aligned}   
 $$
 We multiply both sides by $W(\alpha)$ to obtain 
 $$
 \begin{aligned}
    W(\alpha)H(\alpha)=&W(\alpha)W(\alpha)^T\left[W(\alpha)R(\alpha)\Pi (HH^T)^{-1}H+\alpha S (I-P) \right]\\
    =&W(\alpha)R(\alpha)\Pi (HH^T)^{-1}H+\alpha E(\alpha)S (I-P). 
    \end{aligned}
 $$
Rearranging the terms, we have 
 \begin{equation}
    W(\alpha)\left[H(\alpha)-R(\alpha)\Pi (HH^T)^{-1}H\right]=\alpha E(\alpha)S(I-P).
     \label{eq:second_term_wh_diff}
 \end{equation}
 Finally, combining (\ref{eq:diff_wh}), (\ref{eq:first_term_wh_diff}), and (\ref{eq:second_term_wh_diff}) we obtain
    \begin{equation}
        \begin{aligned}
            W(\alpha)H(\alpha)-WH=& \left[ W(\alpha)R(\alpha)\Pi (HH^T)^{-1}-W\right]H + W(\alpha)\left[H(\alpha)- R(\alpha) \Pi (HH^T)^{-1}H\right]\\
            =&\alpha S P +\alpha Q(\alpha) S (I-P)\\
            =& \alpha (I-Q(\alpha))S P + \alpha Q(\alpha) S.
            \label{eq:diff_WH}
        \end{aligned}
    \end{equation}
    From the properties of the orthogonal projection, (\ref{eq:diff_WH}) implies
    \begin{equation}
        \lVert  W(\alpha)H(\alpha)-WH \rVert_F^2=\alpha^2 \lVert (I-Q(\alpha))S P \rVert_F^2 +\alpha^2 \lVert Q(\alpha) S\rVert_F^2,
        \label{eq:norm_equival}
    \end{equation}
    which proves the second inequality in (\ref{eq:sigma_12}).
    
    Let us now obtain the first inequality. From (\ref{eq:diff_WH}) and the properties of orthogonal projection, 
    \begin{equation}
        \begin{aligned}
            \langle \alpha S, W(\alpha)H(\alpha)-WH \rangle=& \alpha^2 \langle  S, (I-E(\alpha))S P \rangle + \alpha^2 \langle S, E(\alpha) S \rangle \\
            =&\alpha^2 \lVert (I-E(\alpha))S P \rVert_F^2 + \alpha^2  \lVert E(\alpha) S\rVert_F^2\\
            =& \lVert  W(\alpha)H(\alpha)-WH \rVert_F^2,
        \end{aligned}
        \label{eq:scal_prod_dif_WH}
    \end{equation} 
From the observation (\ref{eq:res_corr_Z}) on $Z_\alpha$,  the definition of $\sigma_{WH}(\alpha)$, and from (\ref{eq:scal_prod_dif_WH}), we get
    $$
        \begin{aligned}
            \sigma_{WH}(\alpha)=& \lVert S \rVert_F^2 -\frac{1}{\alpha^2} \lVert  W(\alpha)H(\alpha)-Z_\alpha \rVert_F^2=\lVert S \rVert_F^2 -\frac{1}{\alpha^2} \lVert  W(\alpha)H(\alpha)-WH-\alpha S \rVert_F^2 \\
           = &\lVert S \rVert_F^2 -\frac{1}{\alpha^2} \left[ \lVert  W(\alpha)H(\alpha)-WH \rVert_F^2+\alpha^2 \lVert S \rVert_F^2- 2 \alpha \langle S, W(\alpha)H(\alpha)-WH \rangle \right]\\
            =&\frac{1}{\alpha^2} \lVert  W(\alpha)H(\alpha)-WH \rVert_F^2,
        \end{aligned}
    $$
    which proves the first equality in (\ref{eq:sigma_12}).
\end{proof}

\subsection{Proof of Lemma~\ref{lem:sigm_pc_z}}

\begin{proof}
Let us first show the continuity of the function in (\ref{eq:sigma_z_pc}) when $\alpha \to 1^+$. Recall that $Z_1=Z$ and that $W(\alpha)$ is an orthogonal basis of $\mathcal{R}(Z_\alpha H^T)$. Therefore, the orthogonal projection operator into $\mathcal{R}(Z_\alpha H^T)$ can be written as $W(\alpha)W(\alpha)^T=(Z_\alpha H^T)(Z_\alpha H^T)^\dagger$. If rank($Z_{\alpha}H^T$)=rank($ZH^T$) in some interval $[1,\widehat{\alpha}]$, then by the continuity of the Moore-Penrose inverse~\cite{stewart1969continuity}, we have \begin{equation}
     \lim_{\alpha \to 1^+}W(\alpha) W(\alpha)^T = \lim_{\alpha \to 1^+}(Z_\alpha H^T)(Z_\alpha H^T)^\dagger=(Z H^T)(Z H^T)^\dagger=W(1) W(1)^T.
    \label{eq:lim_WH_alpha}
\end{equation}
From (\ref{eq:lim_WH_alpha}) and the continuity of the product, 
\begin{equation}
\begin{aligned}
     \lim_{\alpha \to 1^+}W(\alpha) H(\alpha)= &\lim_{\alpha \to 1^+}W(\alpha) W(\alpha)^T Z_\alpha = W(1) W(1)^T Z=W(1) H(1),\\
     \end{aligned}
    \label{eq:lim_WH_1} 
\end{equation}
which leads to the continuity of (\ref{eq:sigma_z_pc}) when  $\alpha \to 1^+$. 
Let us recall that 
\[
Z_{\alpha}=WH+\alpha S, \  P_{\Omega^C}(S)=-P_{\Omega^C}(\max(0,WH)), \  P_{\Omega^C}(Z(\alpha))=P_{\Omega^C}(\min(0,(W(\alpha)H(\alpha))).  
\] 
 We have 
    \begin{equation}
        \begin{aligned}   
    &\frac{1}{\alpha^2} \lVert P_{\Omega^C}(W(\alpha)H(\alpha)-Z_{\alpha}) \rVert_F^2-\lVert P_{\Omega^C}(S(\alpha)) \rVert_F^2\\
    =&\frac{1}{\alpha^2} \lVert P_{\Omega^C}(W(\alpha)H(\alpha)-Z_\alpha )\rVert_F^2-\lVert P_{\Omega^C}(W(\alpha)H(\alpha)-Z(\alpha)) \rVert_F^2\\
    =&\frac{1}{\alpha^2} \lVert P_{\Omega^C}(W(\alpha)H(\alpha)-WH- \alpha S) \rVert_F^2-\lVert P_{\Omega^C}(W(\alpha)H(\alpha)-\min(0,(W(\alpha)H(\alpha))) \rVert_F^2\\
    =& \frac{1}{\alpha^2} \lVert P_{\Omega^C}(W(\alpha)H(\alpha)-\min(0,WH)+(\alpha-1)\max(0,WH)) \rVert_F^2-\\
    &-\lVert P_{\Omega^C}(W(\alpha)H(\alpha)-\min(0,(W(\alpha)H(\alpha))) \rVert_F^2.
    \end{aligned}
    \label{eq:rel_sigma_z_1}
    \end{equation} 
Since (\ref{eq:sigma_z_pc}) is continuous for $\alpha \to 1^+$, we just need to show that it is larger than or equal to zero  at $\alpha=1$. Using (\ref{eq:lim_WH_1}) and (\ref{eq:rel_sigma_z_1}), 
$$
    \begin{aligned}
    & \lVert P_{\Omega^C}(W(1)H(1)-\min(0,WH)) \rVert_F^2-\lVert P_{\Omega^C}(\max(0,(W(1)H(1))) \rVert_F^2\\
    =&\lVert P_{\Omega^C}(W(1)H(1))  \rVert_F^2  + \lVert P_{\Omega^C}(\min(0,WH)) \rVert_F^2-2 \langle P_{\Omega^C}(W(1)H(1)), P_{\Omega^C}(\min(0,WH)) \rangle \\
    &-\lVert P_{\Omega^C}(\max(0,(W(1)H(1))) \rVert_F^2\\
    =&\lVert P_{\Omega^C}(\min(0,W(1)H(1)))  \rVert_F^2  + \lVert P_{\Omega^C}(\min(0,WH)) \rVert_F^2-\\
    &-2 \langle P_{\Omega^C}(\min(0,W(1)H(1))), P_{\Omega^C}(\min(0,WH)) \rangle \\ 
    & -2 \langle P_{\Omega^C}(\max(0,W(1)H(1))), P_{\Omega^C}(\min(0,WH)) \rangle\\
    =&\lVert P_{\Omega^C}(\min(0,W(1)H(1))-(\min(0,WH))  \rVert_F^2 \\ 
    & -2 \langle P_{\Omega^C}(\max(0,W(1)H(1))), P_{\Omega^C}(\min(0,WH)) \rangle \geq 0,
    \end{aligned} 
$$ 
which concludes the proof. 
\end{proof}

\subsection{Proof of Lemma~\ref{lem:sig_Z}}

\begin{proof}   
    By an analogous argument to the one used in Lemma~\ref{lem:sig_Z}, we get the continuity of (\ref{eq:sigma_z}) as $\alpha \to 1^+$. Next, we show that (\ref{eq:sigma_z}) is zero at $\alpha=1$. By the definition of the residual in (\ref{eq:residual}), 
    $$ 
    \begin{aligned}
       &\frac{1}{\alpha^2}\lVert P_{\Omega}(W(\alpha)H(\alpha)-Z_{\alpha}) \rVert_F^2-\lVert P_{\Omega}(S(\alpha)) \rVert_F^2\\
       = & \lVert P_{\Omega}( W(\alpha)H(\alpha) -X +X-W H-\alpha S) \rVert_F^2-\lVert P_{\Omega}(S(\alpha)) \rVert_F^2\\
       = & \lVert -P_{\Omega}(S(\alpha))+ (1-\alpha)P_{\Omega}(S)) \rVert_F^2-\lVert P_{\Omega}(S(\alpha)) \rVert_F^2,
        \end{aligned}
   $$ 
   which is equal to zero for $\alpha=1$.
\end{proof}

\bibliographystyle{spmpsci}
\bibliography{bib}

\end{document}